\newlength{\bibitemsep}\setlength{\bibitemsep}{.2\baselineskip plus .05\baselineskip minus .05\baselineskip}
\newlength{\bibparskip}\setlength{\bibparskip}{0pt}
\let\oldthebibliography\thebibliography
\renewcommand\thebibliography[1]{%
  \oldthebibliography{#1}%
  \setlength{\parskip}{\bibitemsep}%
  \setlength{\itemsep}{\bibparskip}%
}
\DeclareMathOperator{\Tr}{Tr}
\newcommand\mynote[1]{}
\newcommand\mynotes[1]{}
\newtheorem{definition}{Definition}
\newtheorem{proposition}{Proposition}
\newtheorem{remark}{Remark}
\def\thm@space@setup{\thm@preskip=2pt
\thm@postskip=2pt} 
\renewenvironment{proof}[1][\proofname]{\par
  \vspace{-\topsep}
  \pushQED{\qed}%
  \normalfont
  \topsep0pt \partopsep0pt 
  \trivlist
  \item[\hskip\labelsep
        \itshape
    #1\@addpunct{.}]\ignorespaces
}{%
  \popQED\endtrivlist\@endpefalse
  \addvspace{3pt plus 3pt} 
}
\icmltitlerunning{Learning to Discover Sparse Graphical Models}
\begin{document} 

\twocolumn[
\icmltitle{Learning to Discover Sparse Graphical Models}



\icmlsetsymbol{equal}{*}

\begin{icmlauthorlist}
\icmlauthor{Eugene Belilovsky}{leuven,inria,sac}
\icmlauthor{Kyle Kastner}{mila}
\icmlauthor{Gael Varoquaux}{inria}
\icmlauthor{Matthew B. Blaschko}{leuven}\
\end{icmlauthorlist}

\icmlaffiliation{leuven}{ESAT-PSI, KU Leuven}
\icmlaffiliation{inria}{INRIA}
\icmlaffiliation{sac}{University of Paris-Saclay}
\icmlaffiliation{mila}{University of Montreal}

\icmlcorrespondingauthor{Eugene Belilovsky}{eugene.belilovsky@inria.fr}

\icmlkeywords{structure learning, machine learning, sparsity, deep learning}

\vskip 0.3in
]

\printAffiliationsAndNotice{} 
\begin{abstract} 
We consider structure discovery of undirected graphical models from observational data. Inferring likely structures from few examples is a complex task often requiring the formulation of priors and sophisticated inference procedures.  Popular methods rely on estimating a penalized maximum likelihood of the precision matrix. However, in these approaches structure recovery is an indirect consequence of the data-fit term, the penalty can be difficult to adapt for domain-specific knowledge, and the inference is computationally demanding.
By contrast, it may be easier to generate training samples of data that arise from graphs with the desired structure properties. We propose here to leverage this
latter source of information as training data to learn a function, parametrized by a neural network that
maps empirical covariance matrices to estimated graph structures.  Learning this function brings two benefits: it implicitly models the desired
structure or sparsity properties to form suitable priors, and it can be
tailored to the specific problem of edge structure discovery,
rather than maximizing data likelihood. Applying this framework, we find our learnable graph-discovery method trained on synthetic data generalizes
well: identifying relevant edges in both synthetic and real data,
completely unknown at training time. We find that on
genetics, brain imaging, and simulation data we obtain performance generally superior to analytical methods.

\end{abstract}

\section{Introduction}
\label{introduction}
Probabilistic graphical models provide a powerful framework to
describe the dependencies between a set of variables. Many applications
infer the structure of a probabilistic graphical model from data to
elucidate the relationships between variables. These relationships are often represented by an undirected graphical model also
known as a Markov Random Field (MRF). We focus on a common MRF model, 
Gaussian graphical
models (GGMs). GGMs are used in
structure-discovery settings for rich data such as neuroimaging, genetics, 
or finance \citep{friedman2008sparse,ryali2012estimation,mohan2012structured,belilovsky2015hypothesis}. Although
multivariate Gaussian distributions are well-behaved,
determining likely structures from few examples is a difficult task when
the data is high dimensional. It requires strong priors,
typically a sparsity assumption, or other restrictions on the structure of the graph, which now make the distribution difficult to express analytically and use.

A standard approach to estimating structure with GGMs in high dimensions is
based on the classic result that the zeros of a precision matrix correspond to zero partial correlation, a necessary and sufficient condition for conditional independence \citep{lauritzen1996graphical}. Assuming only a few conditional dependencies corresponds to a sparsity constraint on the entries of the precision matrix, leading to a combinatorial problem. Many popular approaches to learning GGMs can be seen as leveraging the $\ell_1$-norm to create convex surrogates to this problem.  \citet{meinshausen2006high} use nodewise $\ell_1$ penalized regressions, while other estimators penalize the precision matrix directly \citep{cai2011constrained,friedman2008sparse,ravikumar2011high}, the most popular being the graphical lasso 
\begin{equation}\label{eqn:glasso}
f_{gl}(\hat{\bm{\Sigma}})=\arg\min\limits_{\bm{\Theta}\succ 0}-\log|\bm{\Theta}|+\Tr{(\hat{\bm{\Sigma}}\bm{\Theta})}+\lambda\|\bm{\Theta}\|_1 
\end{equation}
which can be seen as a penalized maximum-likelihood estimator. Here
$\bm{\Theta}$ and $\hat{\bm{\Sigma}}$ are the precision and sample
covariance matrices, respectively. A large variety of alternative
penalties extend the priors of the graphical lasso
\citep{danaher2014joint,ryali2012estimation,varoquaux2010brain}. However, this strategy faces several challenges. Constructing novel surrogates
for structured-sparsity assumptions on MRF structures is difficult, as
priors need to be formulated and incorporated into a penalized maximum likelihood
objective which then calls for the development of an efficient optimization algorithm, often within a separate
research effort. Furthermore, model selection in a penalized maximum
likelihood setting is difficult as regularization parameters are often unintuitive. 

We propose to learn the estimator. Rather than manually designing a
specific graph-estimation procedure, we frame this estimator-engineering
problem as a learning problem, selecting
a function from a large flexible function class by risk minimization.
This allows us to construct a loss function that explicitly aims to
recover the edge structure. Indeed, sampling from a distribution of
graphs and empirical covariances with desired properties is often
possible, even when this distribution is not analytically tractable. As such we can perform
empirical risk minimization to select an appropriate function for edge
estimation. Such a framework gives more control on the assumed level
of sparsity (as opposed to graph lasso) and can impose structure on the 
sampling to shape the expected distribution, while optimizing a desired performance metric. 


For particular cases we show that the problem of interest can be solved
with a polynomial function, which is learnable with a neural network
\citep{andoni2014learning}. Motivated by this fact, as well as theoretical and empricial results on learning smooth functions approximating solutions to combinatorial problems \citep{cohen2015expressive,vinyals2015pointer}, we propose to use a particular convolutional neural network as the function class. We
train it by sampling small datasets, generated from graphs with the
prescribed properties, with a primary focus on sparse graphical models.
We estimate from this data small-sample covariance matrices ($n<p$), 
where $n$ is the number of samples and $p$ is the dimensionality of the
data. Then we use them as training data for the neural network (Figure~\ref{fig:deepgraph}) where
target labels are indicators of present and absent edges in the underlying GGM. The learned network can then be employed in various real-world structure discovery problems.  

In Section \ref{sec:Related} we review the related work. In Section
\ref{sec:Methods} we formulate the risk minimization view of
graph-structure inference and describe how it applies to sparse GGMs.
Section \ref{sec:deepnet} describes and motivates the deep-learning architecture we chose to use for the sparse GGM problem in this work.  In Section \ref{sec:Experiments} we describe the details of how we train an edge estimator for sparse GGMs. We then evaluate its properties extensively on simulation data.  Finally, we show that this edge estimator trained only on synthetic data can obtain state of the art performance at inference time on real neuroimaging and genetics problems, while being much faster to execute than other methods.  

\subsection{Related Work}\label{sec:Related}

\citet{lopez2015towards} analyze learning functions to identify the
structure of directed graphical models in causal inference using estimates of kernel-mean embeddings. 
As in our work, they demonstrate the use of simulations for training
while testing on real data. Unlike our work, they primarily focus on finding the causal direction in two node graphs with many observations.

Our learning architecture is motivated by the recent literature on deep networks.  \citet{vinyals2015pointer} have shown that neural networks can learn approximate solutions to NP-hard combinatorial problems, and the problem of optimal edge recovery in MRFs can be seen as a combinatorial optimization problem. Several recent works have proposed neural architectures for graph input data \citep{henaff2015deep,duvenaud2015convolutional,li2015gated}. These are based on
multi-layer convolutional networks, as in our work, or multi-step
recurrent neural networks. The input in our approach can be viewed as a
complete graph, while the output is a sparse graph, thus none of these are directly applicable. 
Related to our work, \citet{balan2015bayesian}
use deep networks to approximate a posterior distribution. Finally,  \citet{gregor2010learning,xin2016maximal} use deep networks to approximate steps of a known sparse recovery algorithm.



Bayesian approaches to structure learning rely on priors on the graph combined with sampling techniques to estimate the posterior of the graph structure. Some approaches make assumptions on the decomposability of the graph \citep{moghaddam2009accelerating}. The G-Wishart distribution is a popular distribution which forms part of a framework for structure inference, and advances have been recently made in efficient sampling \citep{mohammadi2015bayesian}. 
These methods can still be rather slow compared to competing methods, and in the setting of $p>n$ we find they are less powerful.


\section{Methods}\label{sec:Methods}

\subsection{Learning an Approximate Edge Estimation Procedure}\label{sec:Formulation}
We consider MRF edge estimation as a learnable function. Let $\bm{X} \in \mathbb{R}^{n \times p}$ be a matrix whose $n$ rows are i.i.d.\ samples $x \sim P(x)$ of dimension $p$. Let $G=(V,E)$ be an undirected and unweighted graph associated with the set of variables in $x$. Let $\mathcal{L}=\{0,1\}$ and $N_e=\frac{p(p-1)}{2}$ the maximum possible edges in $E$. Let $Y \in \mathcal{L}^{N_e}$ indicate the presence or absence of edges in the edge set $E$ of $G$, namely 
\begin{equation}
Y^{ij} =  \begin{cases} 
      0 &  x_i  \perp x_j | x_{V\backslash{i,j}} \\
      1 & x_i  \not\perp x_j | x_{V\backslash{i,j}} .
   \end{cases} 
\end{equation}
We define an approximate structure discovery method $g_{w}(\bm{X})$,
which predicts the edge structure,
$\hat{Y}=g_{w}(\bm{X})$, given a sample of data $\bm{X}$.
We focus on $\bm{X}$ drawn from a Gaussian distribution. In this
case, the empirical covariance matrix, $\hat{\bm{\Sigma}}$, is a
sufficient statistic of the population covariance and therefore of the
conditional dependency structure. We thus express our structure-recovery
problem as a function of $\hat{\bm{\Sigma}}$:
$g_{w}(\bm{X}) := f_{w}(\hat{\bm{\Sigma}})$. $f_w$ is parametrized by $w$
and belongs to the function class $\mathcal{F}$. Note that the graphical
lasso in Equation~\eqref{eqn:glasso}  is an $f_w$ for a specific choice of $\mathcal{F}$. 


This view on the edge estimator now allows us to bring the selection of $f_w$ from the domain of human design to the domain of empirical risk minimization over $\mathcal{F}$. Defining a distribution $\mathbb{P}$ on $\mathbb{R}^{p \times p}\times \mathcal{L}^{N_e}$ such that $(\hat{\bm{\Sigma}},Y)\sim \mathbb{P}$, we would like our estimator, $f_w$, to minimize the expected risk
\begin{equation}
R(f) = \mathbb{E}_{(\bm{\hat{{\Sigma}}},Y)\sim \mathbb{P}}[l(f(\bm{\hat{{\Sigma}}}),Y)] .
\end{equation}
Here $l:  \mathcal{L}^{N_e} \times \mathcal{L}^{N_e} \rightarrow
\mathbb{R}^+$  is the loss function. For graphical model selection the
$0/1$ loss function is the natural error metric to consider
\citep{wang2010information}. The estimator with minimum risk is generally
not possible to compute as a closed form expression for most interesting
choices of $\mathbb{P}$, such as those arising from sparse graphs. In
this setting, Eq.~\eqref{eqn:glasso} achieves the information theoretic optimal recovery rate up to a constant for certain $\mathbb{P}$ corresponding to uniformly sparse graphs with a maximum degree, but only when the optimal $\lambda$ is used and the non-zero precision matrix values are bounded away from zero \citep{wang2010information,ravikumar2011high}.

\mynotes{Furthermore, risk minimization allows us to consider other loss functions easily if there are other desired criteria.}

The design of the estimator in Equation \eqref{eqn:glasso} is not
explicitly minimizing this risk functional. Thus modifying the estimator
to fit a different class of graphs (e.g.\ small-world networks) while
minimizing $R(f)$ is not obvious. Furthermore, in practical settings the
optimal $\lambda$ is unknown and precision matrix entries can be very small. We would prefer to directly minimize the
risk functional. Desired structural assumptions on samples from
$\mathbb{P}$ on the underlying graph, such as sparsity, may imply that the distribution is not tractable for analytic solutions. Meanwhile, we can often devise a sampling procedure for $\mathbb{P}$ allowing us to select an appropriate function via empirical risk minimization. Thus it is sufficient to define a rich enough $\mathcal{F}$ over which we can minimize the empirical risk over the samples generated, giving us a learning objective over $N$ samples  $\{Y_k,\bm{\Sigma}_k \}_{k=1}^N$ drawn from $\mathbb{P}$: 
$
\min\limits_{w} \frac{1}{N}\sum_{k=1}^N l(f_{w}(\bm{\hat{{\Sigma}}}_k),Y_k)
$. 
To maintain tractability, we
use the standard cross-entropy loss as a convex surrogate, $\hat{l}: \mathbb{R}^{N_e} \times \mathcal{L}^{N_e}$, given by 
\begin{gather*}
\sum\limits_{i \neq j} \big( Y^{ij}\log (f_{w}^{ij}(\hat{\bm{\Sigma}}))+(1-Y^{ij})\log (1-f_{w}^{ij}(\hat{\bm{\Sigma}}))\big) .
\end{gather*}
We now need to select a sufficiently rich function class for $f_{w}$ and a method to produce appropriate $(Y,\bm{\hat{\Sigma}})$ which model our desired data priors. This will allow us to learn a $f_{w}$ that explicitly attempts to minimize errors in edge discovery.

\subsection{Discovering Sparse GGMs and Beyond}
We discuss how the described approach can be applied to recover sparse Gaussian graphical models. A typical assumption in many modalities is that the number of edges is sparse. A convenient property of these GGMs is that the precision matrix has a zero value in the $(i,j)$th entry precisely when variables $i$ and $j$ are independent conditioned on all others. 
Additionally, the precision matrix and partial correlation matrix have the same sparsity pattern, while the partial correlation matrix has normalized entries.

We propose to simulate our \emph{a priori} assumptions of sparsity and Gaussianity to learn $f_w(\bm{\hat{\Sigma}})$, which can then produce predictions of edges from the input data. We model $P(x|G)$ as arising from a sparse prior on the graph $G$ and correspondingly the entries of the precision matrix $\bm{\Theta}$. To obtain a single sample of $\bm{X}$ corresponds to $n$ i.i.d.\ samples from $\mathcal{N}(0,\bm{\Theta}^{-1})$. We can now train $f_w(\bm{\hat{\Sigma}})$ by generating sample pairs ($\bm{\hat{\Sigma}},Y$). At execution time we standardize the input data and compute the covariance matrix before evaluating $f_w(\bm{\hat{\Sigma}})$. The process of learning $f_w$ for the sparse GGM is given in Algorithm~\ref{alg:struct}.

\begin{minipage}{\columnwidth}
\begin{algorithm}[H]
\begin{algorithmic}
\footnotesize
\caption{\small Training a GGM edge estimator}\label{alg:struct}  
\FOR{i $\in \{1,..,N \}$}
\STATE Sample $G_i \sim \mathbb{P}(G)$
\STATE Sample $\bm{\Theta}_i \sim \mathbb{P}(\bm{\Theta}|G=G_i)$
\STATE $\bm{X}_i \leftarrow \{x_j\sim N(0,\bm{\Theta}_i^{-1}) \}_{j=1}^n$ 
\STATE Construct $(Y_i,\bm{\hat{\Sigma}}_i)$ pair from $(G_i,\bm{X_i})$
\ENDFOR
\STATE Select Function Class $\mathcal{F}$ (e.g. CNN) 
\STATE Optimize: $\min\limits_{f \in \mathcal{F}} \frac{1}{N}\sum_{k=1}^N \hat{l}(f(\bm{\hat{{\Sigma}}}_k),Y_k))$ 
\end{algorithmic}
\end{algorithm}
\end{minipage}

%
A weakly-informative sparsity prior is one where each edge is equally likely with small probability, versus structured sparsity where edges have specific configurations. For obtaining the training samples ($\bm{\hat{\Sigma}},Y$) in this case we would like to create a sparse precision matrix, $\bm{\Theta}$, with the desired number of zero entries distributed uniformly. One strategy to do this and assure the precision matrices lie in the positive definite cone is to first construct an upper triangular sparse matrix and then multiply it by its transpose. This process is described in detail in the experimental section.
Alternatively, an MCMC based G-Wishart distribution sampler can be employed if specific structures of the graph are desired \citep{lenkoski2013direct}. 

The sparsity patterns in real data are often not
uniformly distributed. Many real world networks have a small-world
structure: graphs that are sparse and yet have a comparatively short
average distance between nodes. These transport properties often hinge on
a small number of high-degree nodes called hubs.
Normally, such structural patterns require sophisticated adaptation when applying estimators like Eq.~\eqref{eqn:glasso}.
Indeed, high-degree nodes break the small-sample, sparse-recovery
properties of $\ell_1$-penalized estimators \citep{ravikumar2011high}.
In our framework such structural assumptions appear as a prior that can be learned offline during training of the prediction function. Similarly priors on other distributions such as general exponential families can be more easily integrated. As the structure discovery model can be trained offline, even a slow sampling procedure may suffice.

\subsection{Neural Network Graph Estimator}\label{sec:deepnet}
In this work we propose to use a neural network as our function $f_w$. To motivate this let us consider the extreme case when $n \gg p$. In this case $\bm{\hat{\Sigma}} \approx \bm{\Sigma}$ and thus entries of $\bm{\hat{\Sigma}^{-1}}$  or the partial correlation that are almost equal to zero can give the edge structure. We can show that a neural network is consistent with this limiting case.
\begin{definition}[$\mathbb{P}$-consistency]
  A function class $\mathcal{F}$ is $\mathbb{P}$-consistent if
  $\exists f \in \mathcal{F}$ such that $\mathbb{E}_{(\bm{\hat{{\Sigma}}},Y)\sim \mathbb{P}}[l(f(\bm{\hat{{\Sigma}}}),Y)]  \rightarrow 0$ as $n \rightarrow \infty$ with high probability.
\end{definition}

\begin{proposition}[Existence of $\mathbb{P}$-consistent neural network graph estimator]
  There exists a feed forward neural network function class $\mathcal{F}$ that is $\mathbb{P}$-consistent.
\end{proposition}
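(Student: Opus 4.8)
The plan is to exploit the fact that, in the large-sample limit, the target $Y$ becomes a deterministic function of the input $\hat{\bm{\Sigma}}$, and then to exhibit a single fixed feed-forward network that computes this function up to an error that is eventually negligible. Concretely, I would first recall that $Y^{ij}=\mathbb{1}[\bm{\Theta}^{ij}\neq 0]$ with $\bm{\Theta}=\bm{\Sigma}^{-1}$, so the composite map $\bm{\Sigma}\mapsto\bm{\Theta}\mapsto(\mathbb{1}[\bm{\Theta}^{ij}\neq 0])_{i\neq j}$ recovers the labels exactly. The task therefore reduces to showing (i) that a network can approximate matrix inversion followed by thresholding on the relevant domain, and (ii) that $\hat{\bm{\Sigma}}\to\bm{\Sigma}$ fast enough that this approximation error is swamped as $n\to\infty$.

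For the domain I would impose the regularity assumptions implicit in the surrounding discussion: over the support of $\mathbb{P}$ the precision matrices have eigenvalues in a fixed interval $[\lambda_{\min},\lambda_{\max}]$ with $\lambda_{\min}>0$, and every nonzero entry satisfies $|\bm{\Theta}^{ij}|\geq\theta_{\min}>0$. These conditions confine $\bm{\Sigma}$ to a compact set of well-conditioned matrices and guarantee a signal gap separating edges from non-edges. Next I would construct the inversion sub-network. Choosing a single scale $c>\lambda_{\max}$, the matrices $\mathbf{I}-\bm{\Sigma}/c$ have spectral radius bounded by some $1-\delta<1$ uniformly over the support, so the truncated Neumann series $P_K(\bm{\Sigma})=\tfrac{1}{c}\sum_{k=0}^{K}(\mathbf{I}-\bm{\Sigma}/c)^k$ is a fixed matrix polynomial approximating $\bm{\Sigma}^{-1}$ with error $O((1-\delta)^{K+1})$. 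Taking $K$ large enough that this error is below $\theta_{\min}/4$ fixes the degree once and for all, independent of $n$. Since $P_K$ is built from iterated matrix products, i.e.\ sums of pairwise products of entries, it is realizable to arbitrary accuracy on the compact domain by a feed-forward network using the standard encoding of multiplication through squaring, for which polynomial learnability by neural networks is established \citep{andoni2014learning}. I would then append a thresholding layer: using $|x|=\mathrm{ReLU}(x)+\mathrm{ReLU}(-x)$ followed by a steep sigmoid centered at $\theta_{\min}/2$, each coordinate of the output becomes arbitrarily close to $\mathbb{1}[|P_K(\hat{\bm{\Sigma}})^{ij}|>\theta_{\min}/2]$.

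Finally I would close the argument with concentration. For fixed $p$ and Gaussian data with covariance in the bounded support, $\hat{\bm{\Sigma}}\to\bm{\Sigma}$ in operator norm with high probability as $n\to\infty$. On the high-probability event that $\|\hat{\bm{\Sigma}}-\bm{\Sigma}\|$ is small, continuity of the fixed polynomial $P_K$ together with the degree choice gives $|P_K(\hat{\bm{\Sigma}})^{ij}-\bm{\Theta}^{ij}|<\theta_{\min}/2$ for all $i\neq j$, so every coordinate is thresholded to its correct label and the $0/1$ loss vanishes; the complementary event has probability tending to zero. Since the expected loss is bounded by this vanishing probability, $\mathbb{E}_{\mathbb{P}}[l(f(\hat{\bm{\Sigma}}),Y)]\to 0$, establishing $\mathbb{P}$-consistency of the class containing this network.

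I expect the main obstacle to be item (i): making the inversion-by-polynomial step rigorous as a genuine network of \emph{fixed} size whose accuracy is controlled relative to the signal gap $\theta_{\min}$, while arranging that the polynomial approximation is only required to hold on the compact, well-conditioned neighborhood into which $\hat{\bm{\Sigma}}$ concentrates. This last point is what lets us ignore the singular empirical covariances that occur at small $n$ (where $n<p$), which would otherwise place $\mathbf{I}-\hat{\bm{\Sigma}}/c$ on the boundary of the Neumann convergence region and destroy the expansion.
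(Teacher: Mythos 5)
Your proof is correct in substance, but it follows a genuinely different route from the paper's. The paper never inverts the covariance matrix: it works with the recursive partial-correlation identity $\rho_{i,j|\bm{Z}}=(\rho_{i,j|\bm{Z}\backslash z_o}-\rho_{i,z_o|\bm{Z}\backslash z_o}\rho_{j,z_o|\bm{Z}\backslash z_o})/D$, observes that the denominator $D$ can be dropped because only the indicator $\mathbb{I}(\rho_{i,j|\bm{Z}}=0)$ matters, and thus obtains directly a polynomial in the correlation entries whose vanishing characterizes each edge; it then invokes \citet[Theorem 3.1]{andoni2014learning} to realize that polynomial with a two-layer network, leaving the thresholding step, the signal-gap assumption, and the $\hat{\bm{\Sigma}}\to\bm{\Sigma}$ limit implicit. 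You instead approximate $\bm{\Sigma}^{-1}$ by a truncated Neumann series $P_K$, append an explicit ReLU-plus-sigmoid threshold calibrated to a signal gap $\theta_{\min}$, and close with a concentration argument so the $0/1$ loss vanishes on a high-probability event. What your route buys: the statistical limit and the classification step are handled explicitly, the required regularity (uniform conditioning, entries bounded away from zero) is stated rather than assumed silently, and the network size is controlled by a truncation degree $K$ depending only on the conditioning and the gap --- in contrast to the paper's recursion, whose na\"{i}ve degree is $2^{p-2}$ (a point the paper itself concedes in its remark). What the paper's route buys: it is shorter, needs no spectral assumptions for the algebraic identity itself, and sidesteps inversion error entirely since the dropped-denominator polynomial is exact, not an approximation. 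Two small repairs to your write-up: the scale $c$ must exceed the largest \emph{covariance} eigenvalue, which under your assumption on the precision spectrum is $1/\lambda_{\min}$, not $\lambda_{\max}$; and since a sigmoid never outputs exactly $0$ or $1$, the $0/1$ loss should be evaluated after rounding the output at $1/2$ (a triviality, and one the paper's own proof also glosses over).
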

\begin{proof}
If the data is standardized, each entry of $\bm{\Sigma}$ corresponds to the correlation $\rho_{i,j}$. The partial correlation of edge $(i,j)$ conditioned on nodes $\bm{Z}$, is given recursively as   
\begin{equation}\label{eq:PartialCorrelationRecursive}
\rho_{i,j|\bm{Z}}=(\rho_{i,j|\bm{Z}\backslash{z_o}}-\rho_{i,z_o|\bm{Z}\backslash{z_o}}\rho_{j,z_o|\bm{Z}\backslash{z_o}})\frac{1}
    {D}      .
\end{equation}

We may ignore the denominator, $D$, as we are interested in $\mathbb{I}(\rho_{i,j|\bm{Z}}=0)$. Thus we are left with a recursive formula that yields a high degree polynomial. From \citet[Theorem 3.1]{andoni2014learning} using gradient descent, a neural network with only two layers can learn a polynomial function of degree $d$ to arbitrary precision given sufficient hidden units.
\end{proof}
\begin{remark}
  Na\"{i}vely the polynomial from the recursive definition of partial correlation  is of degree bounded by $2^{p-2}$.  In the worst case, this would seem to imply that we would need an exponentially growing number of hidden nodes to approximate it. However, this problem has a great deal of structure that can allow efficient approximation. Firstly, higher order monomials will go to zero quickly with a uniform prior on $\rho_{i,j}$, which takes values between $0$ and $1$, suggesting that in many cases a concentration bound exists that guarantees non-exponential growth. Furthermore, the existence result is shown already for a shallow network, and we expect a logarithmic decrease in the number of parameters to peform function estimation with a deep network \citep{cohen2015expressive}.
\end{remark}

Moreover, there are a great deal of redundant computations in Eq.~\eqref{eq:PartialCorrelationRecursive} and an efficient dynamic programming implementation can yield polynomial computation time and require only low order polynomial computations with appropriate storage of previous computation. Similarly we would like to design a network that would have capacity to re-use computations across edges and approximate low order polynomials. We also observe that the conditional independence of nodes $i,j$ given $\bm{Z}$ can be computed equivalently in many ways by considering many paths through the nodes $\bm{Z}$. Thus we can choose any valid ordering for traversing the nodes starting from a given edge. 
 
We now describe an efficient architecture for this problem which uses a series of shared operations at each edge. We consider a feedforward network where each edge $i,j$ is associated with a vector, $o_{i,j}^k$, at each layer $k>0$. For each edge, $i,j$, we start with a neighborhood of the 6 adjacent nodes, $i,j,i\textrm{-}1,i\textrm{+}1,j\textrm{-}1,j\textrm{+}1$ for which we take all corresponding edge values from the covariance matrix and construct $o_{i,j}^1$. We proceed at each layer to increase the nodes considered for each $o_{i,j}^k$,  the output at each layer progressively increasing the receptive field making sure all values associated with the considered nodes are present. The entries used at each layer are illustrated in Figure~\ref{fig:conv}. The receptive field here refers to the original covariance entries which are accessible by a given, $o_{i,j}^k$ \citep{receptiveToronto}. The equations defining the process are shown in Figure ~\ref{fig:conv}. Here a neural network $f_{w^k}$ is applied at each edge at each layer and a dilation sequence $d_k$ is used.  We call a network of this topology a D-Net of depth $l$. We use dilation here to allow the receptive field to grow fast, so the network does not need a great deal of layers. We make the following observations:

\begin{figure*}
\centering
\includegraphics[scale=0.9]{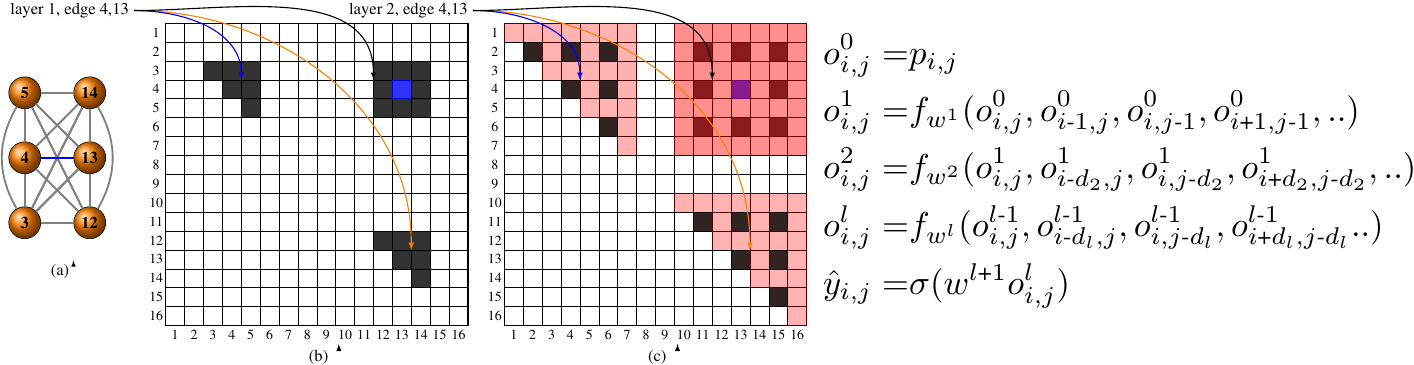}
\caption{(a) Illustration of nodes and edges "seen" at edge 4,13 in layer 1 and (b) Receptive field at layer 1. All entries in grey show the $o_{i,j}^{0}$ in covariance matrix used to compute $o_{4,13}^{1}$. (c) shows the dilation process and receptive field (red) at higher layers. Finally the equations for each layer output are given, initialized by the covariance entries $p_{i,j}$}
\label{fig:conv}
\end{figure*}


%
\begin{proposition}
For general $\mathbb{P}$ it is a necessary condition for $\mathbb{P}$-consistency that the receptive field of D-Net covers all entries of the covariance, $\hat{\bm{\Sigma}}$, at any edge it is applied.
\end{proposition}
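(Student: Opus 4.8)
The plan is to prove necessity by contraposition: I will show that if, at some edge $(i,j)$, the receptive field of the D-Net fails to cover all entries of $\hat{\bm{\Sigma}}$, then one can exhibit a specific distribution $\mathbb{P}$ on which no function in the D-Net class can drive the risk to zero, so the class is not $\mathbb{P}$-consistent. The whole argument rests on one structural feature of the architecture: by construction the prediction $f_{w}^{ij}(\hat{\bm{\Sigma}})$ at edge $(i,j)$ is read off from $o_{i,j}^{l}$, and is therefore a function of the covariance entries in the receptive field alone. Hence if two covariance matrices agree on the receptive field at $(i,j)$ but carry different true labels $Y^{ij}$, every D-Net necessarily misclassifies one of them.

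First I would recall that as $n\to\infty$ we have $\hat{\bm{\Sigma}}\to\bm{\Sigma}$ and $Y^{ij}=\mathbb{I}(\bm{\Theta}_{ij}\neq 0)$ with $\bm{\Theta}=\bm{\Sigma}^{-1}$. By the cofactor formula $\bm{\Theta}_{ij}=(-1)^{i+j}C_{ij}/\det(\bm{\Sigma})$, where $C_{ij}$ is the minor obtained by deleting row $i$ and column $j$. Let $(a,b)$ index an off-diagonal entry omitted from the receptive field (diagonals are fixed to $1$ after standardization, so any missing entry is off-diagonal). Since $i\neq j$, at least one of the symmetric positions $\bm{\Sigma}_{ab},\bm{\Sigma}_{ba}$ survives the deletion of row $i$ and column $j$, so $C_{ij}$ is a genuinely non-constant function of the value placed there, its derivative being a signed sub-minor that is nonzero for a generic base matrix.

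Next I would construct the two instances. Starting from a strongly diagonally dominant base $\bm{\Sigma}_0$ (a small perturbation of the identity), I vary only the entry at $(a,b)$ and its mirror over a short interval, keeping every other entry, in particular every receptive-field entry, fixed; diagonal dominance guarantees positive definiteness throughout. Because $C_{ij}$ depends non-trivially on this value, I can select matrices $\bm{\Sigma}_1,\bm{\Sigma}_2$ with $\bm{\Theta}_{ij}\neq 0$ and $\bm{\Theta}_{ij}=0$ respectively, hence $Y_1^{ij}=1$ and $Y_2^{ij}=0$, while $\bm{\Sigma}_1,\bm{\Sigma}_2$ coincide on the receptive field of $(i,j)$. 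I then let $\mathbb{P}$ be the mixture that with equal probability draws $n$ Gaussian samples from $\mathcal{N}(0,\bm{\Sigma}_1)$ or $\mathcal{N}(0,\bm{\Sigma}_2)$ with the corresponding graph. As $n\to\infty$ the empirical covariance concentrates on $\bm{\Sigma}_1$ or $\bm{\Sigma}_2$, so any $f_{w}$ emits a single value $f_{w}^{ij}$ common to both components (it sees only the shared entries), while the targets disagree; the loss on edge $(i,j)$ is then bounded below by a positive constant on at least one component, so $\mathbb{E}_{\mathbb{P}}[l]\not\to 0$. This contradicts $\mathbb{P}$-consistency and establishes the claim.

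The main obstacle is the joint requirement in the construction step: simultaneously keeping all receptive-field entries fixed, preserving positive definiteness across the whole interval, and forcing $C_{ij}$ to cross zero. The cleanest way to discharge it is to pick the base $\bm{\Sigma}_0$ so that the edge is exactly absent ($C_{ij}=0$) and then use the nonvanishing derivative of $C_{ij}$ in $\bm{\Sigma}_{ab}$ to produce a nearby value with $C_{ij}\neq 0$; diagonal dominance of $\bm{\Sigma}_0$ makes the positivity constraint hold for all sufficiently small perturbations, so both instances sit inside the valid region automatically.
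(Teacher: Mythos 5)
Your proof is correct, but it follows a genuinely different route from the paper's. The paper's proof is a two-sentence intuitive argument: take a chain graph whose terminal nodes $i$ and $j$ happen to be adjacent in the matrix ordering; these nodes are marginally correlated yet conditionally independent given all intermediate variables, so an estimator that cannot see every entry of $\hat{\bm{\Sigma}}$ cannot ``explain away'' the correlation --- equivalently, every variable appears when the recursion \eqref{eq:PartialCorrelationRecursive} is expanded. Your argument formalizes the same underlying indistinguishability idea by contraposition: you isolate the architectural fact (the prediction at edge $(i,j)$ is a function of the receptive-field entries alone), then construct two population covariances that agree on the receptive field but carry different labels $Y^{ij}$, using the cofactor characterization $\bm{\Theta}_{ij}\propto C_{ij}$ and a perturbation of the missing entry $(a,b)$, and finally package the two into a mixture $\mathbb{P}$ whose asymptotic risk is bounded away from zero for every function in the class. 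Your route buys rigor and generality --- it handles an arbitrary missing entry, not just the chain-graph configuration, and it turns the paper's asserted ``one would need to consider all other variables'' into an explicit two-point separation argument; the paper's route buys brevity and a concrete, interpretable graph family exhibiting the failure mode.

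One step you should tighten: you need the derivative of $C_{ij}$ with respect to the $(a,b)$ entry to be nonzero \emph{at a base point where $C_{ij}=0$}, and genericity alone does not give you both conditions simultaneously, since $\{C_{ij}=0\}$ is a measure-zero set. This is easily discharged by working on the precision side: a symmetric perturbation $d\bm{\Sigma}=\epsilon(e_ae_b^{\top}+e_be_a^{\top})$ changes the precision by $d\bm{\Theta}=-\bm{\Theta}(d\bm{\Sigma})\bm{\Theta}$, so the first-order change in $\bm{\Theta}_{ij}$ is $-\epsilon\left(\bm{\Theta}_{ia}\bm{\Theta}_{bj}+\bm{\Theta}_{ib}\bm{\Theta}_{aj}\right)$. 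Choosing a base precision $\bm{\Theta}_0$ close to the identity with $(\bm{\Theta}_0)_{ij}=0$ and $(\bm{\Theta}_0)_{ia},(\bm{\Theta}_0)_{bj}\neq 0$ while the other relevant off-diagonal entries vanish makes this quantity nonzero, and $\bm{\Sigma}_0=\bm{\Theta}_0^{-1}$ is then automatically near the identity, hence positive definite and diagonally dominant as your construction requires (the degenerate cases $a\in\{i,j\}$ or $b\in\{i,j\}$ are handled the same way, using $\bm{\Theta}_{ii}>0$). With that lemma in place, your argument is complete.
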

\begin{proof}
Consider nodes $i$ and $j$ and a chain graph such that $i$ and $j$ are adjacent to each other in the matrix but are at the terminal nodes of the chain graph. One would need to consider all other variables to be able to explain away the correlation. Alternatively we can see this directly from expanding Eq.~\eqref{eq:PartialCorrelationRecursive}. 
\end{proof}
\begin{proposition}
A $p\times p$ matrix $\bm{\hat{\Sigma}}$  will be covered by the receptive field for a D-Net of depth $\log_2(p)$ and $d_k=2^{k-1}$
\end{proposition}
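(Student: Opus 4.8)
The plan is to track how the set of covariance entries visible to a single edge grows with depth and to show it saturates to the whole matrix after $\log_2(p)$ layers. First I would formalize the receptive field of $o_{i,j}^k$ as a set of nodes $S_{i,j}^k \subseteq \{1,\dots,p\}$, with the convention that an entry $\hat{\bm{\Sigma}}_{a,b}$ is ``seen'' exactly when both $a,b \in S_{i,j}^k$; this matches the layer-$1$ construction, where $S_{i,j}^1 = \{i-1,i,i+1,j-1,j,j+1\}$ and the grey entries of Figure~\ref{fig:conv} are precisely the pairs drawn from this node set. Because the indices reachable around each endpoint form a contiguous interval, it suffices to track, for each endpoint, the radius $r_k$ of that interval, i.e.\ $S_{i,j}^k \supseteq [i - r_k, i + r_k] \cap \{1,\dots,p\}$.

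Next I would set up the recursion for $r_k$ induced by the dilation schedule. A dilated aggregation at layer $k$ combines each current feature with its neighbors at offset $\pm d_k$, so the reachable interval grows additively: $r_k = r_{k-1} + d_k$, with base case $r_1 = 1$ from the six-node layer-$1$ neighborhood (consistent with $d_1 = 2^{0} = 1$). Substituting $d_k = 2^{k-1}$ and unrolling gives the closed form $r_k = \sum_{m=1}^{k} 2^{m-1} = 2^{k} - 1$, the familiar geometric growth of the receptive field under exponentially increasing dilation.

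I would then plug in the depth. Taking $l = \log_2(p)$ (rounding up to $\lceil \log_2 p \rceil$ when $p$ is not a power of two) yields $r_l = 2^{\log_2 p} - 1 = p - 1$. The final step is to check this radius is enough at every edge, including boundary ones: for any $i \in \{1,\dots,p\}$ we have $i - (p-1) \le 1$ and $i + (p-1) \ge p$, so the interval $[i-r_l,\, i+r_l]$ around the single endpoint $i$ already contains all of $\{1,\dots,p\}$. Hence $S_{i,j}^l = \{1,\dots,p\}$ for every edge, and since the visible entries are exactly the pairs drawn from $S_{i,j}^l \times S_{i,j}^l$, the full $p\times p$ matrix $\hat{\bm{\Sigma}}$ is covered.

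The step I expect to require the most care is the receptive-field recursion itself rather than the arithmetic: I must argue correctly that stacked dilated layers \emph{add} radii, so the total radius is $\sum_k d_k$ and not, say, the maximum, and that covering the node range around a single endpoint is sufficient precisely because the set of visible matrix entries is the product $S_{i,j}^l \times S_{i,j}^l$. The boundary/clamping check and the rounding of $\log_2(p)$ to an integer are minor, but worth stating explicitly to make the claim exact.
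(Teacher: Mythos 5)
Your proof is correct and follows essentially the same route as the paper: the paper's argument is that with dilations $d_k=2^{k-1}$ each layer's output receives input from features at the edge of its current receptive field, "effectively doubling it," so $\log_2(p)$ layers suffice — which is exactly your additive recursion $r_k = r_{k-1}+d_k = 2^k-1$ made explicit. Your version is simply a more careful rendering of the same geometric-growth argument, adding the product-set formalization of "seen" entries, the closed-form sum, and the boundary/rounding checks that the paper leaves implicit.
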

\begin{proof}
The receptive field  of a D-Net with dilation sequence $d_k=2^{k-1}$ of depth $l$ is $O(2^l)$. We can see this as $o_{i,j}^k$ will receive input from $o_{a,b}^{k-1}$ at the edge of it's receptive field, effectively doubling it. It now follows that we need at least $\log_2(p)$ layers to cover the receptive field.
\end{proof}

 
%
%
%
%

Intuitively adjacent edges have a high overlap in their receptive fields and can easily share information about the non-overlapping components. This is analogous to a parametrized message passing. For example if edge $(i,j)$ is explained by node $k$, as $k$ enters the receptive field of edge $(i,j-1)$, the path through $(i,j)$ can already be discounted. In terms of Eq.~\eqref{eq:PartialCorrelationRecursive} this can correspond to storing computations that can be used by neighbor edges from lower levels in the recursion. 


As $f_{w^k}$ is identical for all nodes, we can simultaneously implement all edge predictions efficiently as a convolutional network.  We make sure that to have considered all edges relevant to the current set of nodes in the receptive field which requires us to add values from filters applied at the diagonal to all edges. In Figure~\ref{fig:conv} we illustrate the nodes and receptive field considered with respect to the covariance matrix. This also motivates a straightforward implementation using 2D convolutions (adding separate convolutions at $i,i$ and $j,j$ to each $i,j$ at each layer to achieve the specific input pattern described) shown in Figure~\ref{fig:deepgraph}. 

Ultimately our choice of architecture that has shared computations and multiple layers is highly scalable as compared with a naive fully connected approach and allows leveraging existing optimized 2-D convolutions. In preliminary work we have also considered fully connected layers but this proved to be much less efficient in terms of storage and scalibility than using deep convolutional networks. 

Considering the general $n \gg p$ case is illustrative. However, the main
benefit of making the computations differentiable and learned from
data is that we can take advantage of the sparsity and structure
assumptions to obtain more efficient results than
naive computation of partial correlation or matrix inversion. As $n$
decreases our estimate of $\hat{\rho}_{i,j}$ becomes inexact; a
data-driven model that takes better advantage of the assumptions on the
underlying distribution and can more accurately recover the graph structure. 

The convolution structure is dependent on the order of the variables used
to build the covariance matrix, which is arbitrary. Permuting the input 
data we can obtain another estimate of the output. In the experiments, we leverage these
various estimate in an ensembling approach, averaging the results of several 
permutations of input.  We observe that this generally yields a modest increase in accuracy, but that even a single node ordering can show substantially improved performance over competing methods in the literature.

\begin{figure*}
\centering
\includegraphics[scale=0.8]{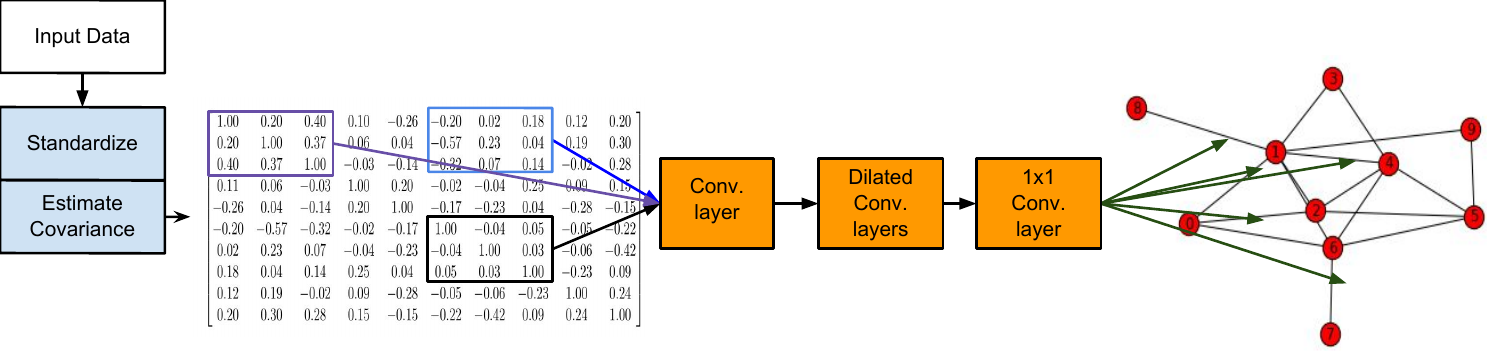}
\captionof{figure}{\footnotesize Diagram of the DeepGraph structure discovery architecture used in this work. The input is first standardized and then the sample covariance matrix is estimated. A neural network consisting of multiple dilated convolutions \cite{yu2015multi}  and a final $1\times1$ convolution layer is used to predict edges corresponding to non-zero entries in the precision matrix.}
\label{fig:deepgraph}
\end{figure*}

\section{Experiments}\label{sec:Experiments}
Our experimental evaluations focus on the challenging high dimensional settings in which $p>n$ and consider both synthetic data and real data from genetics and neuroimaging. In our experiments we explore how well networks trained on parametric samples generalize, both to unseen synthetic data and to several real world problems. In order to highlight the generality of the learned networks, we apply the same network to multiple domains. We train networks taking in 39, 50, and 500 node graphs. The former sizes are chosen based on the real data we consider in subsequent sections. We refer to these networks as DeepGraph-39, 50, and 500. In all cases we have 50 feature maps of $3 \times 3$ kernels. The 39 and 50 node network with 6 convolutional layers and $d_k=k+1$. For the 500 node network with 8 convolutional layers and $d_k=2^{k+1}$. We use ReLU activations. The last layer has $1 \times 1$ convolution and a sigmoid outputing a value of $0$ to $1$ for each edge.

We sample $P(X|G)$ with a sparse prior on $P(G)$ as follows. We first construct a lower diagonal matrix, $L$, where each entry has $\alpha$ probability of being zero. Non-zero entries are set uniformly between $-c$ and $c$. Multiplying $LL^T$ gives a sparse positive definite precision matrix, $\bm{\Theta}$. This gives us our $P(\bm{\Theta}|G)$ with a sparse prior on $P(G)$. We sample from the Gaussian $\mathcal{N}(0,\bm{\Theta}^{-1})$ to obtain samples of $\bm{X}$. Here $\alpha$ corresponds to a specific sparsity level in
the final precision matrix, which we set to produce matrices $92-96\%$ sparse and $c$ chosen so that partial correlations range $0$ to $1$.

Each network is trained continously with new samples generated until the validation error saturates. For a given precision matrix we generate 5 possible $\bm{X}$ samples to be used as training data, with a total of approximately $100K$ training samples used for each network. The networks are optimized using ADAM \citep{kingma2014adam} coupled with cross-entropy loss as the objective function (cf.\ Sec.~\ref{sec:Formulation}). We use batch normalization at each layer. Additionally, we found that
using the absolute value of the true partial correlations as labels, instead of hard binary labels, improves results.  

\paragraph{Synthetic Data Evaluation}
To understand the properties of our learned networks, we evaluated them on different synthetic data than the ones they were trained on. More specifically, we used a completely different third party sampler so as to avoid any contamination. We use DeepGraph-39, which takes 4 hours to train, on a variety of settings. The same trained network is utilized in the subsequent neuroimaging evaluations as well. DeepGraph-500 is also used to evaluate larger graphs.

We used the \texttt{BDGraph} R-package to produce sparse precision
matrices based on the G-Wishart distribution
\citep{mohammadi2015bayesian} as well as the R-package \texttt{rags2ridges} \citep{rags2ridges} to generate data from small-world
networks corresponding to the Watts–Strogatz model \citep{watts-strogatz1998}. We compared our
learned estimator against the \texttt{scikit-learn} \citep{pedregosa2011scikit}
implementation of Graphical Lasso with regularizer chosen by
cross-validation as well as the Birth-Death Rate MCMC (BDMCMC) method from
\citet{mohammadi2015bayesian}.

For each scenario we repeat the
experiment for 100 different graphs and small sample observations showing the average area under the ROC curve
(AUC), precision@k corresponding to $5\%$ of possible edges, and calibration error (CE)
\citep{mohammadi2015bayesian}. 

For graphical lasso we use the partial correlations
to indicate confidence in edges; \texttt{BDGraph} automatically returns
posterior probabilities as does our method. Finally to understand the effect of the regularization parameter we additionally report the result of graphical lasso under optimal regularizer setting on the testing data. 

Our method dominates all other approaches in all cases with $p>n$ (which also corresponds to the training regime). For the case of random Gaussian graphs with n=35 (as in our training data), and graph sparsity of $95\%$, we have superior performance and can further improve on this by averaging permutations. Next we apply the method to less straightforward synthetic data, such as that arising from small-world graphs which is typical of many applications. 
We found that, compared to baseline methods, our network performs
particularly well with high-degree nodes and when the distribution becomes non-normal. In particular 
our method performs well on the relevant metrics with small-world networks,
a very common family of graphs in real-world data, obtaining superior
precision at the primary levels of interest. Figure~\ref{fig:SynthGraphs}
shows examples of random and Watts-Strogatz small-world graphs used in these experiments.  

Training a new network for each number of samples can pose difficulties with our proposed method. Thus we evaluted how robust the network DeepGraph-39 is to input covariances obtained from fewer or more samples. We find that overall the performance is quite good even when lowering the number of samples to $n=15$, we obtain superior performance to the other approaches (Table~\ref{table:Synth}). We also applied DeepGraph-39 on data from a multivariate generalization of the Laplace distribution \citep{gomez1998multivariate}. As in other experiments precision matrices were sampled from the G-Wishart at a sparsity of $95\%$. \citet[Proposition 3.1]{gomez1998multivariate} was
applied to produce samples. We find that
DeepGraph-39 performs competitively, despite the discrepancy between
train and test distributions. Experiments with
variable sparsity are considered in the supplementary material, which find that for very sparse graphs, the networks remain robust in performance, while for increased density performance degrades but remains competitive.  
%

Using the small-world network data generator \citep{rags2ridges}, we demonstrate that we can update the generic sparse prior to a structured one. We re-train DeepGraph-39 using only $1000$ examples of small-world graphs mixed with $1000$ examples from the original uniform sparsity model. We perform just one epoch of training and observe markedly improved performance on this test case as seen in the last row of Table~\ref{table:Synth}.

For our final scenario we consider the very challenging setting with $500$ nodes and only $n=50$ samples. We note that the MCMC based method fails to converge at this scale, while graphical lasso is very slow as seen in the timing performance and barely performs better than chance. Our method convincingly outperforms graphical lasso in this scenario as shown in Tabel ~\ref{table:Large}. Here we additionally report precision at just the first $0.05\%$ of edges since competitors perform nearly at chance at the $5\%$ level.


\begin{table}[h]
\begin{center}
\resizebox{1\columnwidth}{!}{
\begin{tabular}{ |c|c|c|c|c| } 
\hline
Experimental Setup	&	Method	&	Prec@$5\%$	&	AUC	&	 CE       \\\hline
 & Glasso & 0.361 $\pm$ 0.011 & 0.624 $\pm$ 0.006  & 0.07  \\
Gaussian& Glasso (optimal) & 0.384 $\pm$ 0.011 & 0.639 $\pm$ 0.007  & 0.07 \\
 Random Graphs& BDGraph & 0.441 $\pm$ 0.011 & 0.715 $\pm$ 0.007  & 0.28 \\
($n=35,p=39$)& DeepGraph-39 & 0.463 $\pm$ 0.009 & 0.738 $\pm$ 0.006  & $0.07$ \\
& DeepGraph-39+Perm & \bm{$0.487 \pm 0.010$} & \bm{$0.740 \pm 0.007$}  & 0.07 \\    \hline	
& Glasso & 0.539 $\pm$ 0.014 & 0.696 $\pm$ 0.006  & 0.07\\
Gaussian& Glasso (optimal) & 0.571 $\pm$ 0.011 & 0.704 $\pm$ 0.006  & 0.07\\
 Random Graphs& BDGraph & \bm{$0.648 \pm 0.012$} &\bm{ $0.776 \pm 0.007$}  & 0.16\\
($n=100,p=39$)& DeepGraph-39 & 0.567 $\pm$ 0.009 & 0.759 $\pm$ 0.006  & 0.07 \\
& DeepGraph-39+Perm & $0.581 \pm 0.008$ & $0.771 \pm 0.006$  & 0.07 \\\hline
& Glasso & 0.233 $\pm$ 0.010 & 0.566 $\pm$ 0.004  & 0.07 \\
Gaussian & Glasso (optimal) & 0.263 $\pm$ 0.010 & 0.578 $\pm$ 0.004  & 0.07  \\
Random Graphs& BDGraph & 0.261 $\pm$ 0.009 & 0.630 $\pm$ 0.007  & 0.41 \\
($n=15,p=39$)& DeepGraph-39 & 0.326 $\pm$ 0.009 & 0.664 $\pm$ 0.008  & 0.08 \\
& DeepGraph-39+Perm &\bm{$0.360 \pm 0.010$} & \bm{$0.672 \pm 0.008$}  & 0.08 \\\hline

	& Glasso & 0.312 $\pm$ 0.012 & 0.605 $\pm$ 0.006  & 0.07  \\
Laplace & Glasso (optimal) & 0.337 $\pm$ 0.011 & 0.622 $\pm$ 0.006  & 0.07 \\
Random Graphs& BDGraph & 0.298 $\pm$ 0.009 & 0.687 $\pm$ 0.007  & 0.36 \\
($n=35,p=39$)& DeepGraph-39 & 0.415 $\pm$ 0.010 & 0.711 $\pm$ 0.007  & 0.07  \\
& DeepGraph-39+Perm &\bm{$0.445 \pm 0.011$} & \bm{$0.717 \pm 0.007$}  & 0.07  \\\hline	
& Glasso & 0.387 $\pm$ 0.012 & 0.588 $\pm$ 0.004  & 0.11 \\
Gaussian& Glasso (optimal) & 0.453 $\pm$ 0.008 & 0.640 $\pm$ 0.004  & 0.11 \\
Small-World Graphs & BDGraph & 0.428 $\pm$ 0.007 & 0.691 $\pm$ 0.003  & 0.17 \\
(n=35,p=39)& DeepGraph-39 & \bm{$0.479 \pm 0.007$} & 0.709 $\pm$ 0.003  & 0.11\\
& DeepGraph-39+Perm & 0.453 $\pm$ 0.007 & \bm{$0.712 \pm 0.003$}  & 0.11	\\\cline{2-5}	
& DeepGraph-39+update & \bm{$0.560 \pm 0.008$} & \bm{$0.821 \pm 0.002$}  & 0.11 \\
& DeepGraph-39+update+Perm & 0.555 $\pm$ 0.007 & 0.805 $\pm$ 0.003  & 0.11 \\\hline	
\end{tabular}
}
\caption{\small For each case we generate 100 sparse graphs with 39 nodes and data matrices sampled (with $n$ samples) from distributions with those underlying graphs. DeepGraph outperforms other methods in terms of AP, AUC, and precision at 5$\%$ (the approximate true sparsity). In terms of precision and AUC DeepGraph has better performance in all cases except $n>p$.}
\label{table:Synth}
\end{center}
\end{table}
We compute the average execution time of our method compared to Graph Lasso and BDGraph on a CPU in Table \ref{table: Timing}. We note that we use a production quality version of graph lasso \citep{pedregosa2011scikit}, whereas we have not optimized the network execution, for which known strategies may be applied \citep{denton2014exploiting}. 

\begin{minipage}{1\columnwidth}
\captionsetup{type=table}
\resizebox{1\columnwidth}{!}{
\begin{tabular}{ |c|c|c|c|c| } 
\hline
Method	&Prec@$0.05\%$ &	Prec@$5\%$	&	AUC	&	 CE       \\\hline

 random &0.052 $\pm$ 0.002 &  0.053 $\pm$ 0.000  & 0.500 $\pm$ 0.000  & 0.05 \\	
 Glasso &0.156 $\pm$ 0.010  &0.055 $\pm$ 0.001  & 0.501 $\pm$ 0.000  & 0.05 \\
 Glasso (optimal) &0.162 $\pm$ 0.010  &0.055 $\pm$ 0.001 & 0.501 $\pm$ 0.000  & 0.05 \\
 DeepGraph-500 &0.449 $\pm$ 0.018 & 0.109 $\pm$ 0.002 & 0.543 $\pm$ 0.002  & 0.06 \\
 DeepGraph-500+Perm &$\bm{0.583 \pm 0.018}$  &$\bm{0.116 \pm 0.00}$2  & \bm{$0.547 \pm 0.002$}  & \bm{$0.06$} \\\hline
\end{tabular}
}
\captionof{table}{\footnotesize Experiment on 500 node graphs with only 50 samples repeated 100 times. This corresponds to the experimental setup of Gaussian Random Graphs (n=50,p=500).  Improved performance in all metrics.}
\label{table:Large}
\end{minipage}\hfill%

\begin{minipage}{\columnwidth}

\begin{minipage}{0.4\columnwidth}
\centering
\captionsetup{type=figure}
  \subfigure[]{
    \includegraphics[width=0.32\columnwidth]{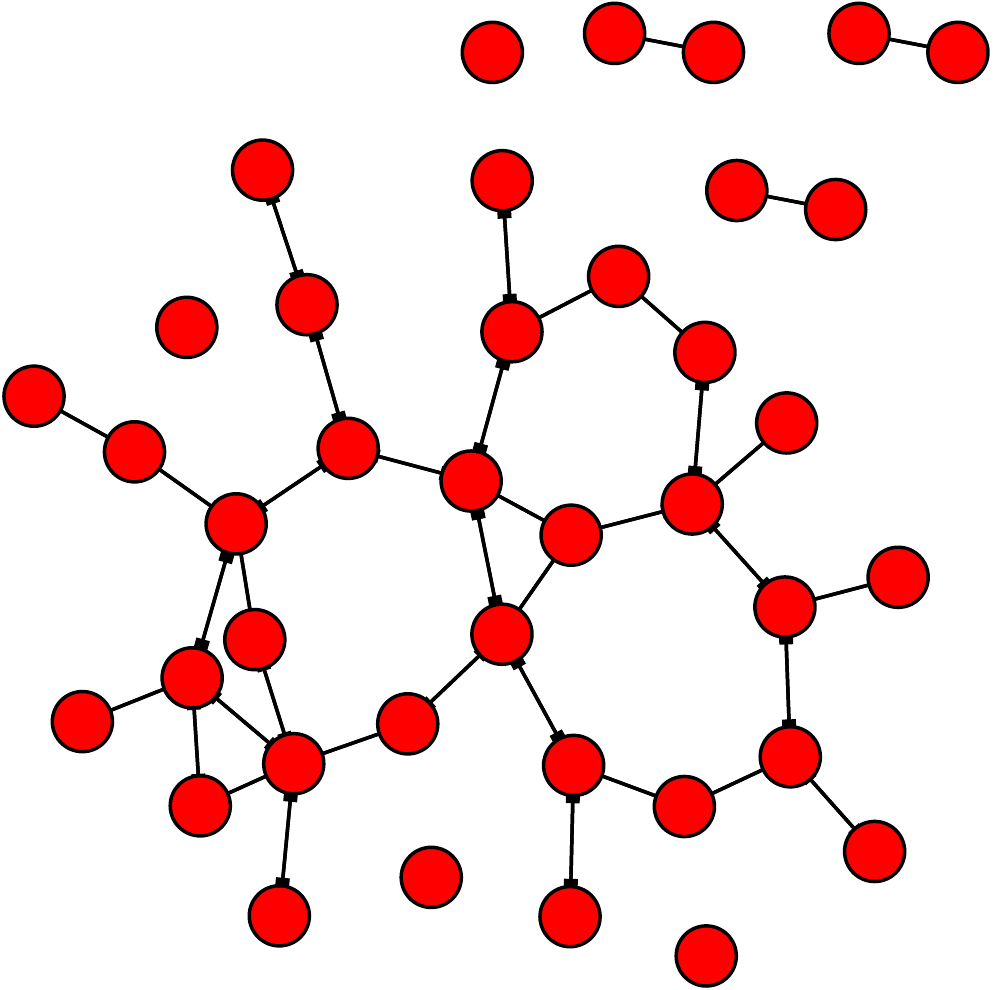}}
  \subfigure[]{
    \includegraphics[width=0.32\columnwidth]{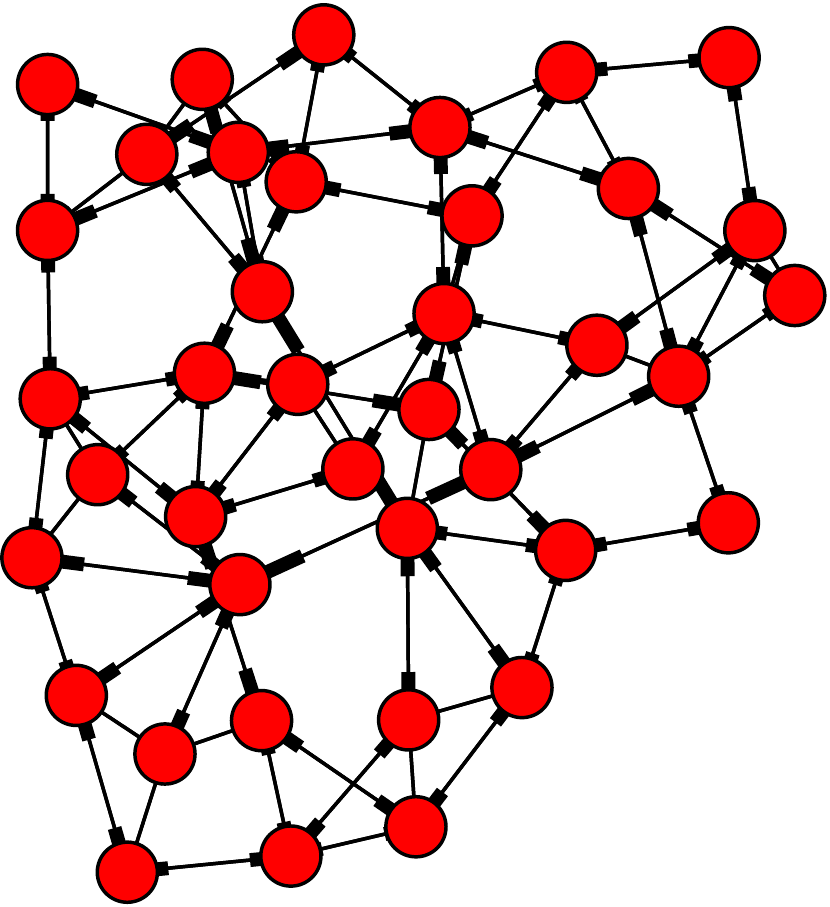}}

\captionof{figure}{\footnotesize Example of (a) random and (b) small world used in experiments} 
\label{fig:SynthGraphs}
\end{minipage}
\hfill
\begin{minipage}{0.55\columnwidth}
\resizebox{1\columnwidth}{!}{
\begin{tabular}{ |c|c|c| } 
 \hline
  			  			 & 50 nodes (s) & 500 nodes (s)  \\
  			  			 \hline 
 \texttt{sklearn} GraphLassoCV & 4.81 & 554.7\\ 
 \texttt{BDgraph}     & 42.13  & N/A\\ 
 DeepGraph & \textbf{\textit{0.27}} & \textbf{\textit{5.6}}\\
 \hline
\end{tabular}
}
\captionof{table}{\footnotesize Avg. execution time over 10 trials for 50 and 500 node problem on a CPU for Graph Lasso, BDMCMC, and DeepGraph}
\label{table: Timing}
\end{minipage}

\end{minipage}

\paragraph{Cancer Genome Data}\label{sec:genome}
We perform experiments on a gene expression dataset described in \citet{honorio2012statistical}. The data come from a cancer genome atlas from 2360 subjects for
various types of cancer. We used the first 50 genes from \citet[Appendix
C.2]{honorio2012statistical} of commonly regulated genes in cancer. We
evaluated on two groups of subjects, one with breast invasive carcinoma
(BRCA) consisting of 590 subjects and the other colon adenocarcinoma
(COAD) consisting of 174 subjects.

Evaluating edge selection in real-world data
is challenging. We use the following methodology: for each method we
select the top-$k$ ranked edges, recomputing the maximum likelihood
precision matrix with support given by the corresponding edge selection
method. We then evaluate the likelihood on held-out data. We
repeat this procedure for a range of $k$. We rely on Algorithm~0 in
\citet{hara2010localization} to compute the maximum likelihood precision
given a support. The experiment is repeated for each of CODA and BRCA
subject groups 150 times. Results are shown in Figure~\ref{fig:Gene_Abide_graphs}. In all cases we use 40 samples for edge selection and precision estimation. We compare with graphical lasso as well as the Ledoit-Wolf shrinkage estimator \citep{ledoit2004well}. We additionally consider the MCMC based approach described in previous section. For graphical lasso and Ledoit-Wolf, edge selection is based on thresholding partial correlation \citep{balmand2015estimation}.

Additionally, we evaluate the stability of the solutions provided by the
various methods. In several applications a low variance on the estimate
of the edge set is important. On Table \ref{table:Stability}, we report
Spearman correlations between pairs of solutions, as it is a measure of a
monotone link between two variables.
DeepGraph has far better stability in the genome experiments and is competitive in the fMRI data. 

\begin{figure*}
\centering
\includegraphics[width=0.75\columnwidth]{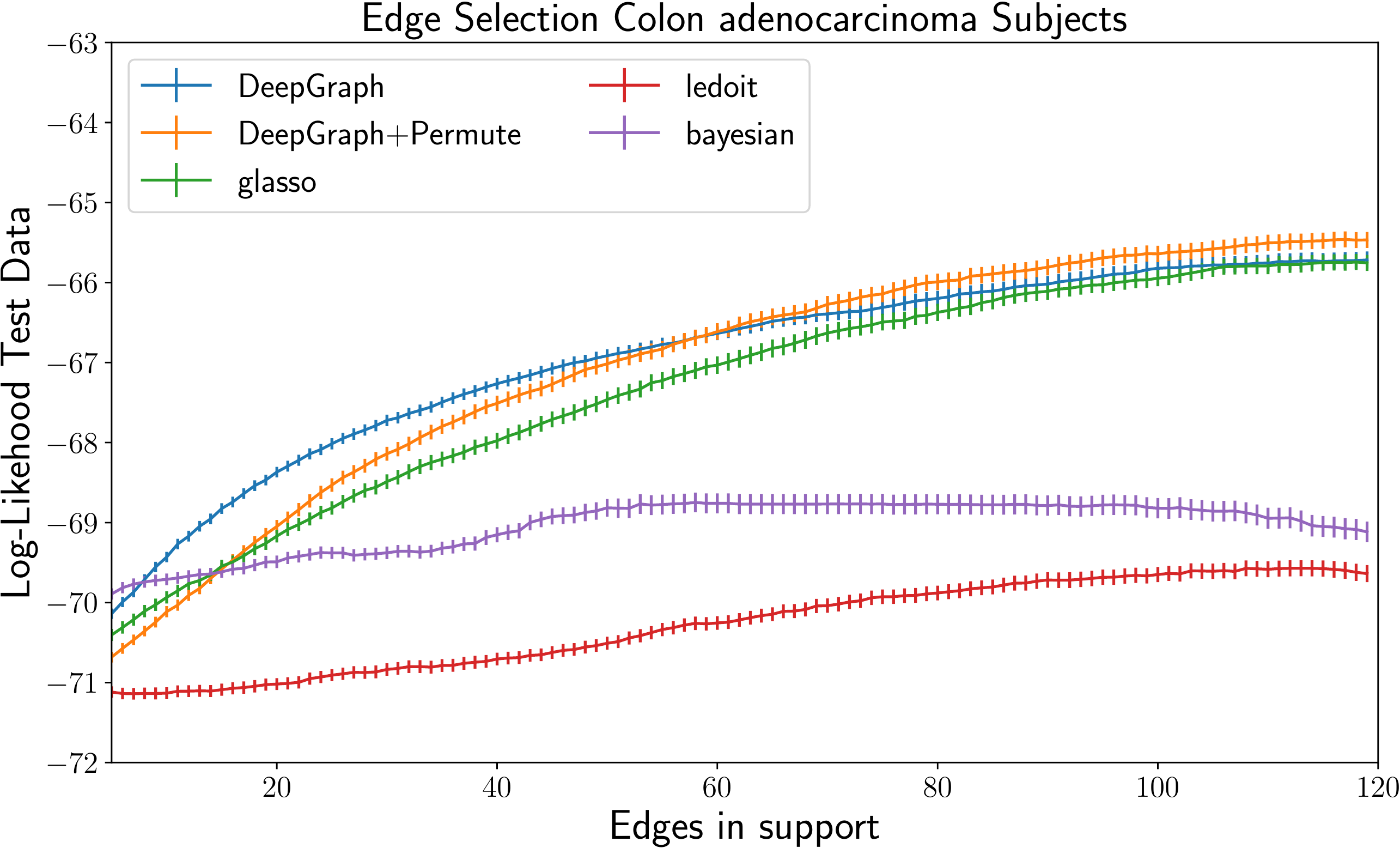}
\includegraphics[width=0.75\columnwidth]{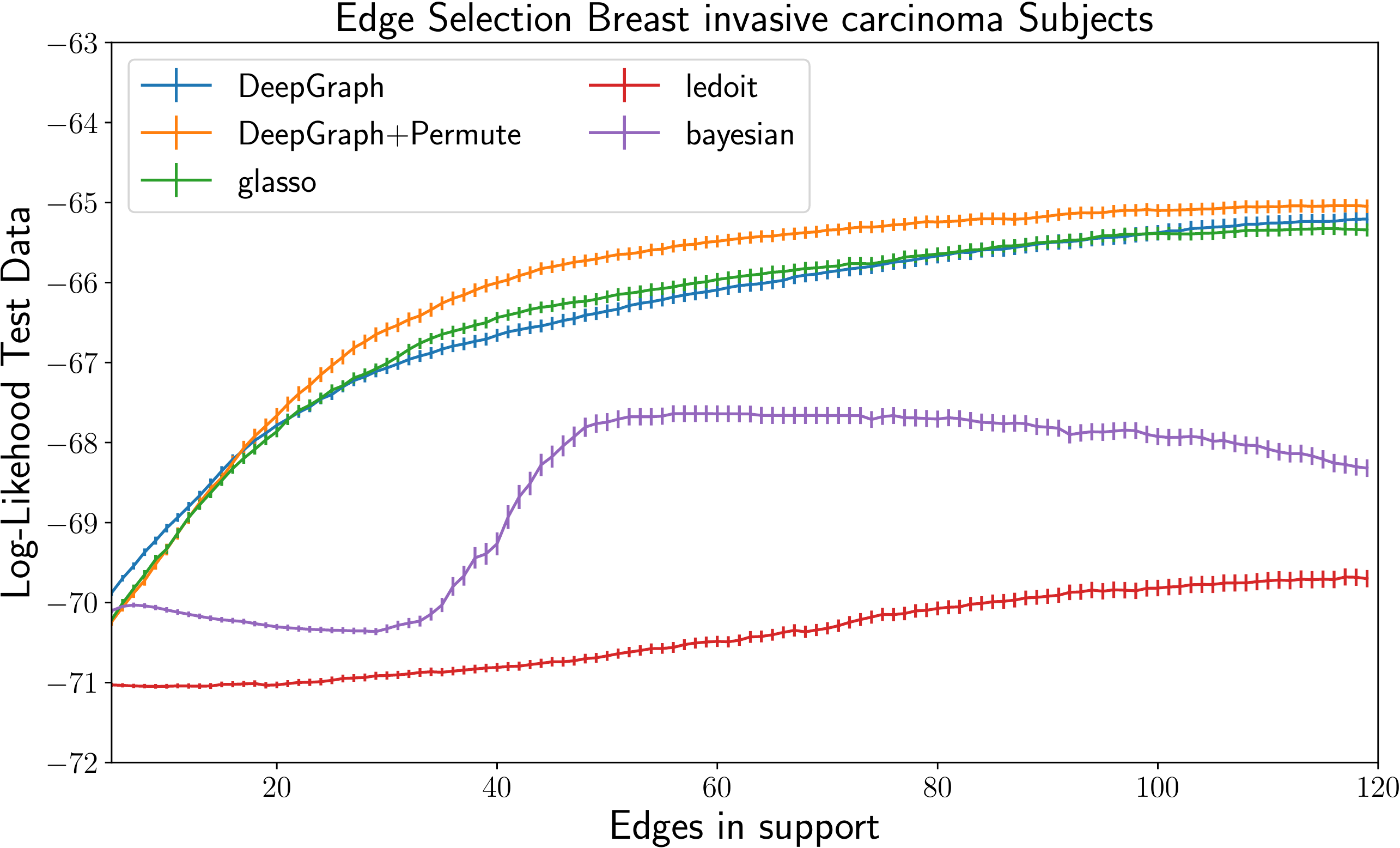}
\includegraphics[width=0.75\columnwidth]{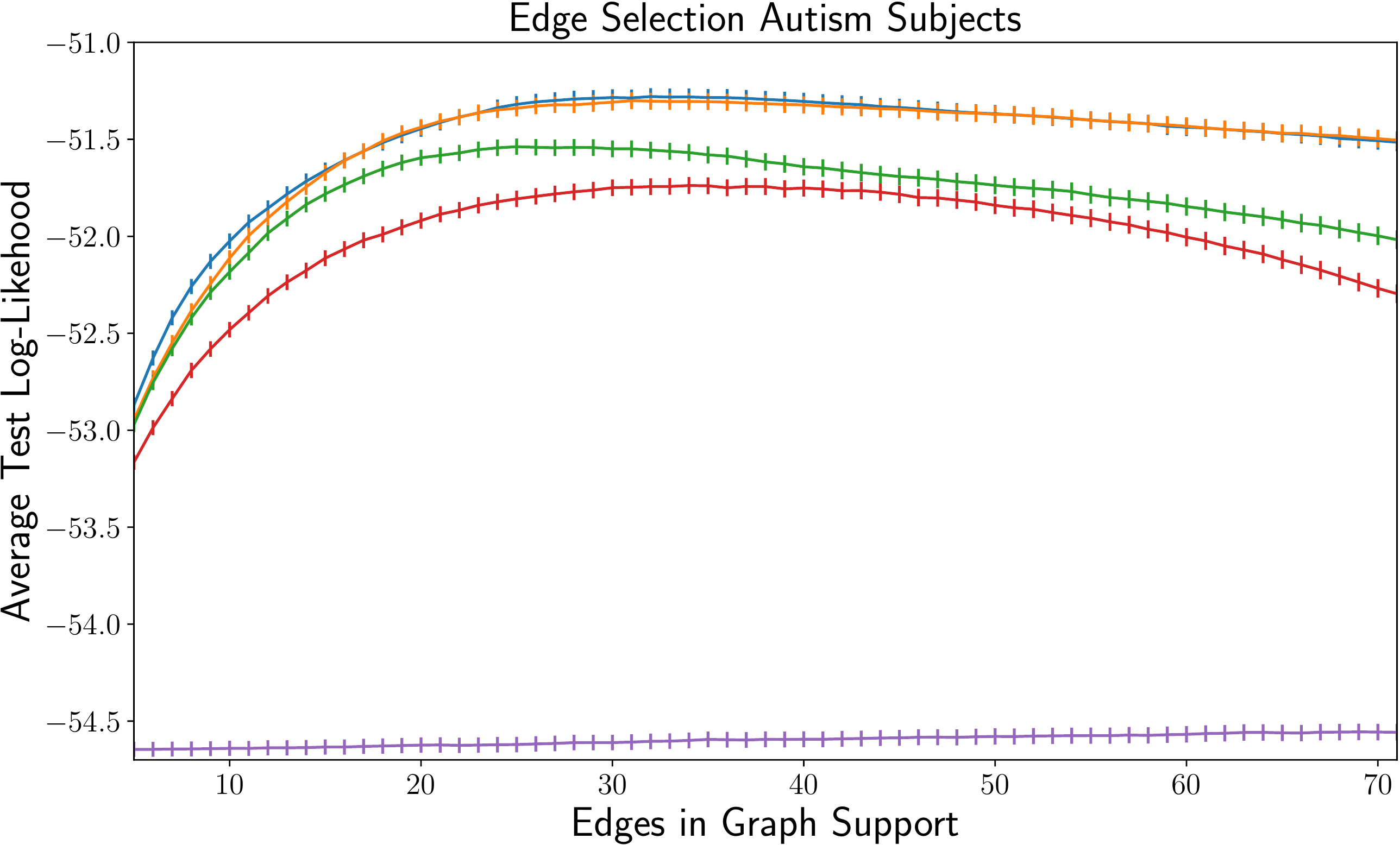}
\includegraphics[width=0.75\columnwidth]{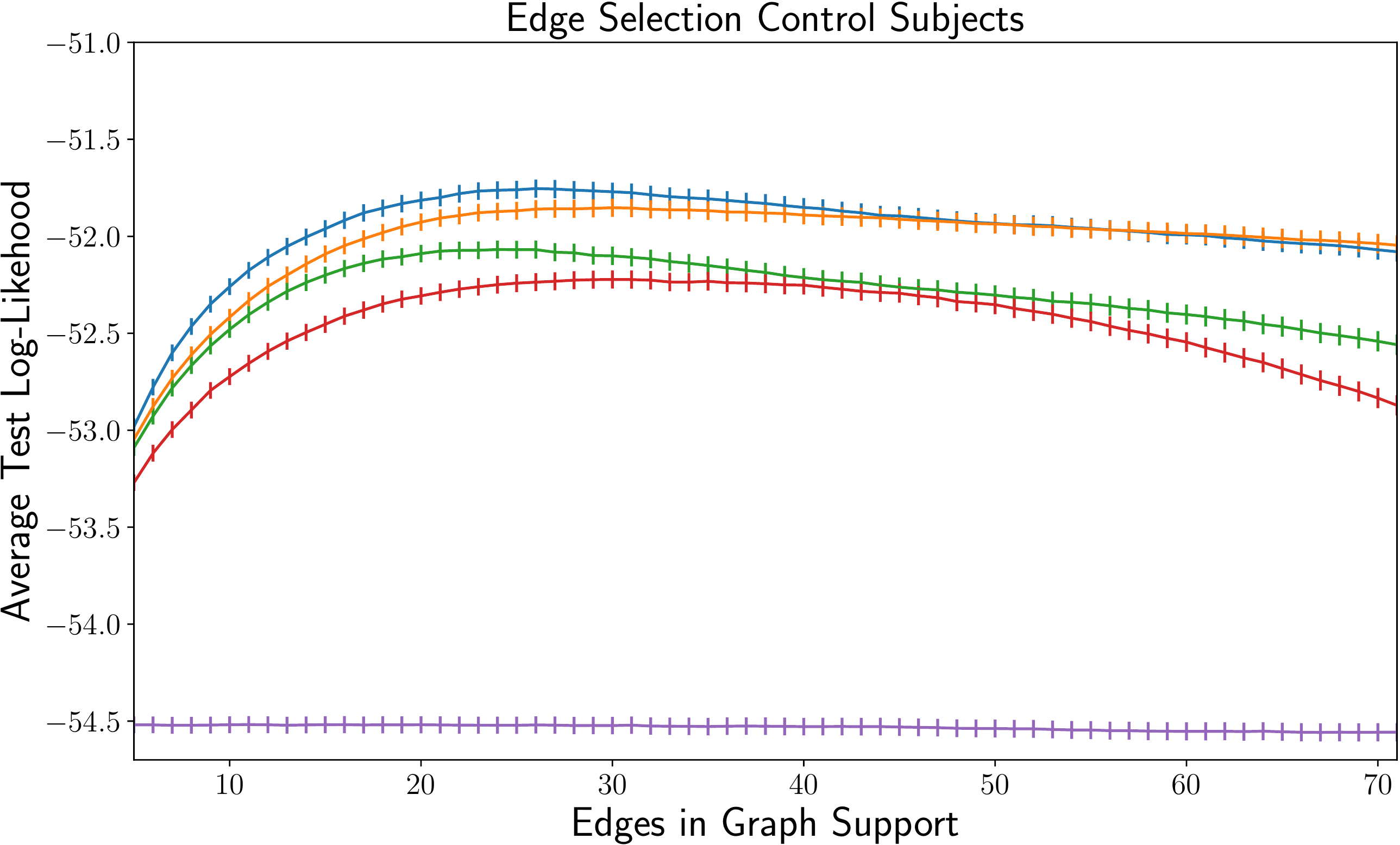}
\caption{ \footnotesize \footnotesize Average test likelihood for COAD and BRCA subject
 groups in gene data and neuroimaging data using different number of
 selected edges. Each experiment is repeated 50 times for genetics data.
It is repeated approximately 1500 times in the fMRI to obtain significant
 results due high variance in the data. DeepGraph with averaged
 permutation dominates in all cases for genetics data, while
 DeepGraph+Permutation is superior or equal to competing methods in the fMRI data.}
\label{fig:Gene_Abide_graphs}

\end{figure*}
%

\paragraph{Resting State Functional Connectivity}\label{sec:ABIDE}
We evaluate our graph discovery method to study
brain functional connectivity in resting-state fMRI data. 
Correlations in brain activity measured via fMRI reveal functional
interactions between remote brain regions. These are an important measure
to study psychiatric
diseases that have no known anatomical support. Typical connectome
analysis describes each subject or group by a GGM measuring functional connectivity between a set of regions \citep{varoquaux2013learning}. 
We use
the ABIDE  dataset
\citep{martino2014autism},
a large scale 
resting state fMRI dataset. 
It gathers brain scans from 539
individuals suffering from autism spectrum disorder and 573
controls over 16 sites.\footnote{\url{http://preprocessed-connectomes-project.github.io/abide/}}  
%
For our experiments we use an atlas with 39 regions of interest described in \citet{varoquaux2011multi}. 
\begin{minipage}{1\columnwidth}
\captionsetup{type=table}
\resizebox{0.99\textwidth}{!}{
\begin{tabular}{ |c|c|c|c|c| } 
 \hline
  			  & Gene BRCA & Gene COAD & ABIDE Control & ABIDE Autistic \\ \hline 
 Graph Lasso  & $0.25\pm .003$    & $0.34\pm 0.004$    & $0.21 \pm .003$ &$\bm{0.21 \pm .003}$\\ 
 Ledoit-Wolfe &$0.12\pm 0.002$  & $0.15 \pm 0.003$ 		&$0.13 \pm .003$&$0.13 \pm .003$\\
 Bdgraph     & $0.07 \pm 0.002$ & $0.08 \pm 0.002 $ & $N/A$ & $N/A$ \\
 DeepGraph  &$\bm{0.48 \pm 0.004}$   & $\bm{0.57 \pm 0.005}$&$\bm{0.23 \pm .004}$&$0.17 \pm .003$\\
 DeepGraph +Permute & $0.42 \pm 0.003  $  &     $0.52 \pm 0.006$&$0.19\pm .004 $&$0.14 \pm .004$\\  
 \hline
\end{tabular}
}
\captionof{table}{\footnotesize Average Spearman correlation results for real data showing stability of solution amongst 50 trials}
\label{table:Stability}
\end{minipage}\hfill%

We use the network DeepGraph-39, the same network and parameters from synthetic
experiments, using the same evaluation protocol as used in the genomic data. For both control and autism patients we use
time series from 35 random subjects to estimate edges and corresponding 
precision matrices. 
We find that for both the Autism and Control group we can obtain edge selection comparable to graph lasso for very few selected edges. When the number of selected edges is in the range above 25 we begin to perform significantly better in edge selection as seen in Fig.~\ref{fig:Gene_Abide_graphs}. We evaluated stability of the results as shown in Tab.~\ref{table:Stability}. DeepGraph outperformed the other methods across the board.

\setlength{\belowcaptionskip}{-20pt}
\begin{figure}
\centering
\includegraphics[width=0.48\columnwidth]{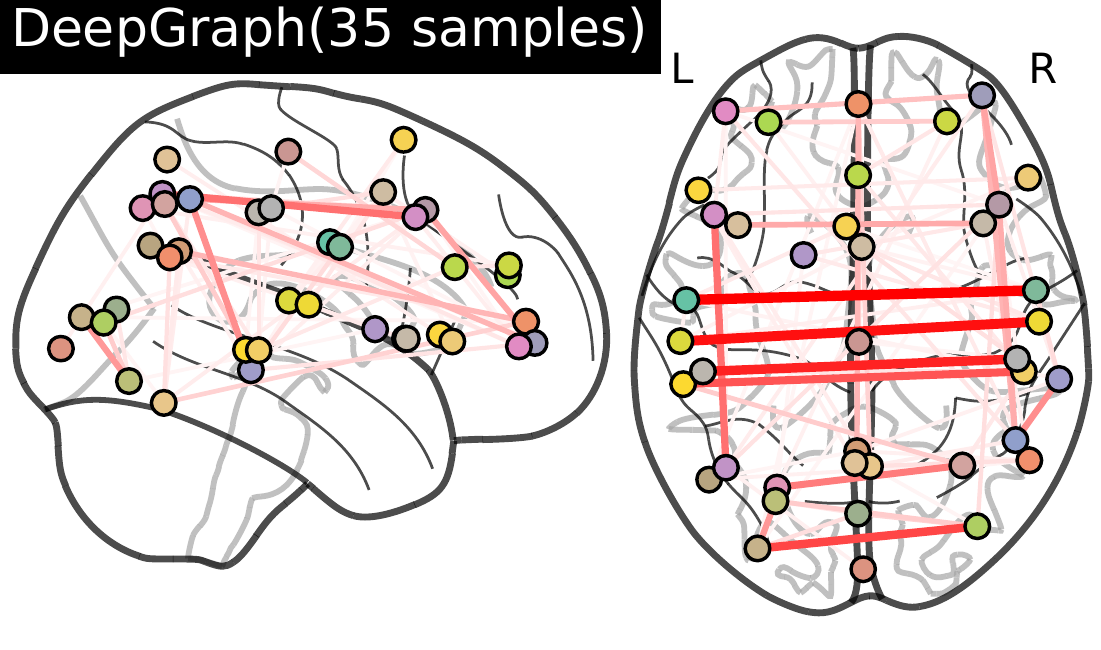} 
\includegraphics[width=0.48\columnwidth]{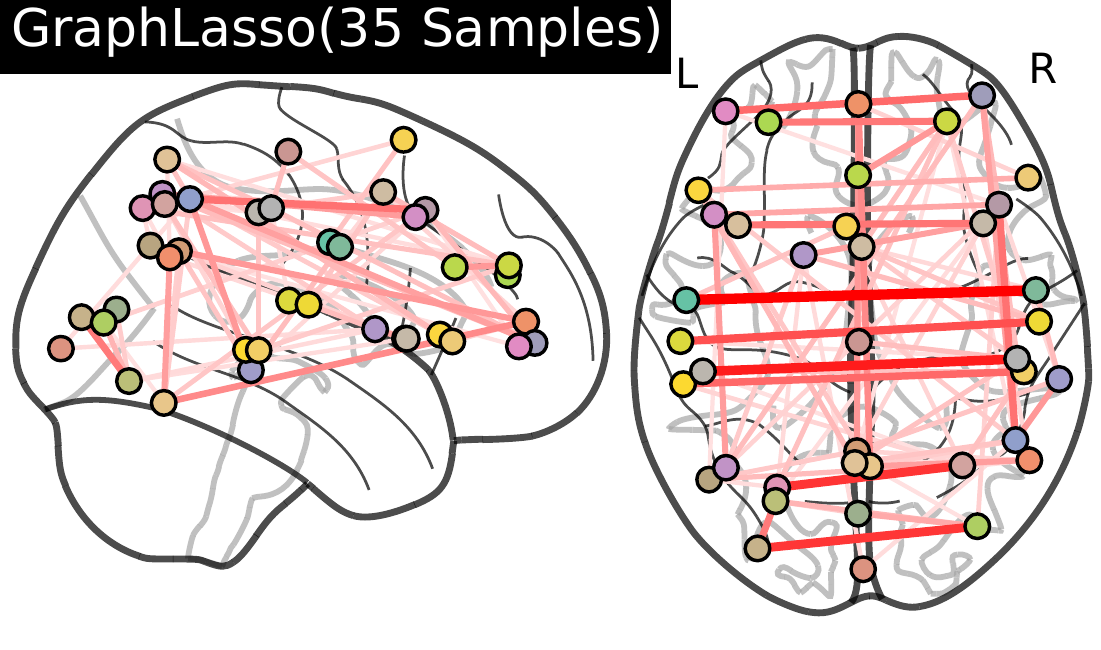} \\
\includegraphics[width=0.5\columnwidth]{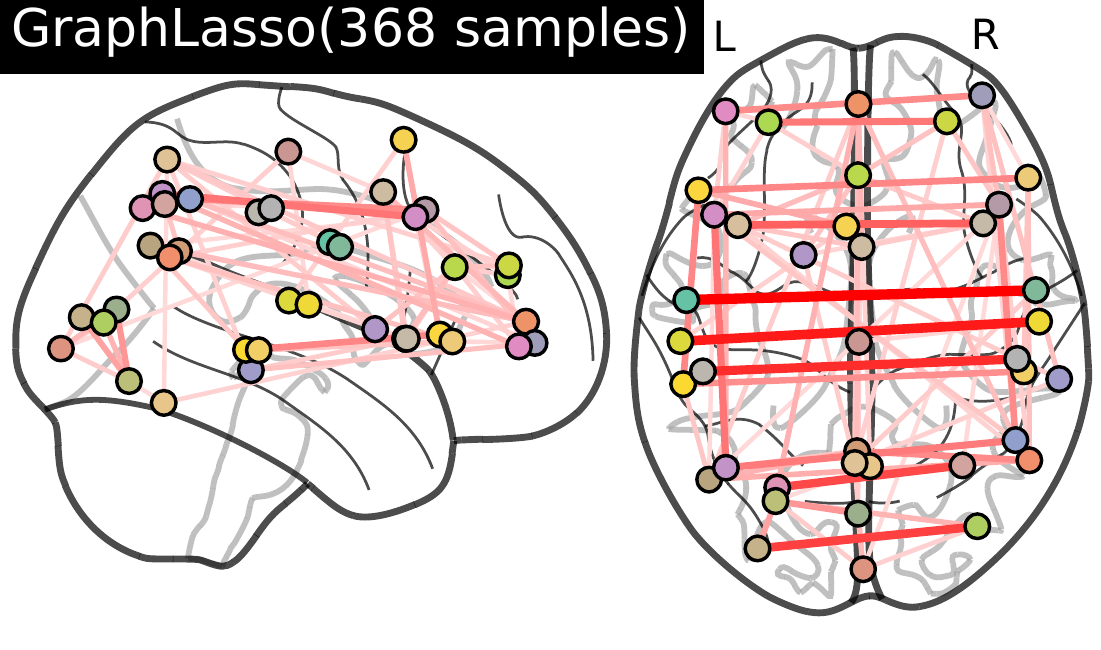}
\caption{\footnotesize Example solution from DeepGraph and Graph Lasso in the small sample regime on the same 35 samples, along with a larger sample solution of Graph Lasso for reference. DeepGraph is able to extract similar key edges as graphical lasso}
\label{fig:ABIDE_Brains}
\end{figure}

ABIDE has high variability across sites and
subjects. As a result, to resolve differences between approaches, we needed to 
perform 1000 folds to obtain well-separated error bars. We found that the birth-death MCMC method took very long to converge on this data, moreover the need for many folds to obtain significant results amongst the methods made this approach prohibitively slow to evaluate. 

We show the edges returned by Graph Lasso and DeepGraph for a sample from
35 subjects (Fig.~\ref{fig:ABIDE_Brains}) in the control group. We also
show the result of a large-sample result based on 368 subjects from
graphical lasso. In visual evaluation of the edges returned by DeepGraph
we find that they closely align with results from a large-sample estimation procedure. Furthermore we can see several edges in the subsample which were particularly strongly activated in both methods. 



\section{Discussion and Conclusions}
Learned graph estimation  outperformed strong baselines in both accuracy and speed. Even in cases that deviate from standard GGM sparsity assumptions (e.g. Laplace, small-world) it performed substantially better. When fine-tuning on the target distribution performance further improves. Most importantly the learned estimator generalizes well to real data finding relevant stable edges. We
also observed that the learned estimators generalize to variations not seen at training time (e.g. different $n$ or sparsity), which points to this potentialy learning generic computations.
This also shows potential to more easily scale the method to different
graph sizes. One could consider transfer learning, where a network for
one size of data is used as a starting point to learn a network working
on larger dimension data. 

Penalized maximum likelihood can provide performance guarantees under restrictive assumptions on the form of the distribution and not considering the regularization path. In the proposed method one could obtain empirical bounds under the prescribed data distribution. Additionally, at execution time the speed of the approach can allow for re-sampling based uncertainty estimates and efficient model selection (e.g.\ cross-validation) amongst several trained estimators.


 
We have introduced the concept of learning an estimator for determining
the structure of an undirected graphical model. A network architecture
and sampling procedure for learning such an estimator for the case of
sparse GGMs was proposed. We obtained competitive results on synthetic
data with various underlying distributions, as well as on challenging
real-world data. Empirical results show that our method works
particularly well compared to other approaches for small-world networks,
an important class of graphs common in real-world domains. We have shown that neural networks can 
obtain improved results over various statistical methods on real
datasets, despite being trained with samples from parametric
distributions. Our approach enables straightforward specifications of new
priors and opens new directions in efficient graphical structure discovery from few examples.

\section*{Acknowledgements}

This work is partially funded by Internal Funds KU Leuven, FP7-MC-CIG 334380, DIGITEO 2013-0788D - SOPRANO, and ANR-11-BINF-0004 NiConnect. We thank Jean Honorio for providing pre-processed Genome Data.

\small
\titlespacing\section{0pt}{0pt}{0.1pt}
\bibliography{structlearn}

\begin{thebibliography}{39}
\providecommand{\natexlab}[1]{#1}
\providecommand{\url}[1]{\texttt{#1}}
\expandafter\ifx\csname urlstyle\endcsname\relax
  \providecommand{\doi}[1]{doi: #1}\else
  \providecommand{\doi}{doi: \begingroup \urlstyle{rm}\Url}\fi

\bibitem[Andoni et~al.(2014)Andoni, Panigrahy, Valiant, and
  Zhang]{andoni2014learning}
Andoni, Alexandr, Panigrahy, Rina, Valiant, Gregory, and Zhang, Li.
\newblock Learning polynomials with neural networks.
\newblock In \emph{ICML}, 2014.

\bibitem[Balan et~al.(2015)Balan, Rathod, Murphy, and
  Welling]{balan2015bayesian}
Balan, Anoop~Korattikara, Rathod, Vivek, Murphy, Kevin, and Welling, Max.
\newblock Bayesian dark knowledge.
\newblock In \emph{NIPS}, 2015.

\bibitem[Balmand \& Dalalyan(2016)Balmand and Dalalyan]{balmand2015estimation}
Balmand, Samuel and Dalalyan, Arnak~S.
\newblock On estimation of the diagonal elements of a sparse precision matrix.
\newblock \emph{Electronic Journal of Statistics}, 10\penalty0 (1):\penalty0
  1551--1579, 2016.

\bibitem[Belilovsky et~al.(2016)Belilovsky, Varoquaux, and
  Blaschko]{belilovsky2015hypothesis}
Belilovsky, Eugene, Varoquaux, Ga{\"e}l, and Blaschko, Matthew~B.
\newblock Hypothesis testing for differences in {Gaussian} graphical models:
  Applications to brain connectivity.
\newblock In \emph{NIPS}, 2016.

\bibitem[Cai et~al.(2011)Cai, Liu, and Luo]{cai2011constrained}
Cai, Tony, Liu, Weidong, and Luo, Xi.
\newblock A constrained $\ell_1$ minimization approach to sparse precision
  matrix estimation.
\newblock \emph{Journal of the American Statistical Association}, 106\penalty0
  (494):\penalty0 594--607, 2011.

\bibitem[Cohen et~al.(2016)Cohen, Sharir, and Shashua]{cohen2015expressive}
Cohen, Nadav, Sharir, Or, and Shashua, Amnon.
\newblock On the expressive power of deep learning: a tensor analysis.
\newblock In \emph{COLT}, 2016.

\bibitem[Danaher et~al.(2014)Danaher, Wang, and Witten]{danaher2014joint}
Danaher, Patrick, Wang, Pei, and Witten, Daniela~M.
\newblock The joint graphical lasso for inverse covariance estimation across
  multiple classes.
\newblock \emph{Journal of the Royal Stat.\ Society(B)}, 76\penalty0
  (2):\penalty0 373--397, 2014.

\bibitem[Denton et~al.(2014)Denton, Zaremba, Bruna, LeCun, and
  Fergus]{denton2014exploiting}
Denton, Emily~L, Zaremba, Wojciech, Bruna, Joan, LeCun, Yann, and Fergus, Rob.
\newblock Exploiting linear structure within convolutional networks for
  efficient evaluation.
\newblock In \emph{NIPS}, 2014.

\bibitem[Di~Martino~et al(2014)]{martino2014autism}
Di~Martino~et al, Adriana.
\newblock The autism brain imaging data exchange: Towards a large-scale
  evaluation of the intrinsic brain architecture in autism.
\newblock \emph{Molecular psychiatry}, 19:\penalty0 659, 2014.

\bibitem[Duvenaud~et al(2015)]{duvenaud2015convolutional}
Duvenaud~et al, David~K.
\newblock Convolutional networks on graphs for learning molecular fingerprints.
\newblock In \emph{NIPS}, 2015.

\bibitem[Friedman et~al.(2008)Friedman, Hastie, and
  Tibshirani]{friedman2008sparse}
Friedman, Jerome, Hastie, Trevor, and Tibshirani, Robert.
\newblock Sparse inverse covariance estimation with the graphical lasso.
\newblock \emph{Biostatistics}, 9\penalty0 (3):\penalty0 432--441, 2008.

\bibitem[G{\'o}mez et~al.(1998)G{\'o}mez, Gomez-Viilegas, and
  Marin]{gomez1998multivariate}
G{\'o}mez, E, Gomez-Viilegas, MA, and Marin, JM.
\newblock A multivariate generalization of the power exponential family of
  distributions.
\newblock \emph{Commun Stat Theory Methods}, 27\penalty0 (3):\penalty0
  589--600, 1998.

\bibitem[Gregor \& LeCun(2010)Gregor and LeCun]{gregor2010learning}
Gregor, Karol and LeCun, Yann.
\newblock Learning fast approximations of sparse coding.
\newblock In \emph{ICML}, 2010.

\bibitem[Hara \& Takemura(2010)Hara and Takemura]{hara2010localization}
Hara, Hisayuki and Takemura, Akimichi.
\newblock A localization approach to improve iterative proportional scaling in
  {Gaussian} graphical models.
\newblock \emph{Commun Stat Theory Methods}, 39\penalty0 (8-9):\penalty0
  1643--1654, 2010.

\bibitem[Henaff et~al.(2015)Henaff, Bruna, and LeCun]{henaff2015deep}
Henaff, Mikael, Bruna, Joan, and LeCun, Yann.
\newblock Deep convolutional networks on graph-structured data.
\newblock \emph{arXiv:1506.05163}, 2015.

\bibitem[Honorio et~al.(2012)Honorio, Jaakkola, and
  Samaras]{honorio2012statistical}
Honorio, Jean, Jaakkola, Tommi, and Samaras, Dimitris.
\newblock On the statistical efficiency of $\ell_{1,p}$ multi-task learning of
  {Gaussian} graphical models.
\newblock \emph{arXiv:1207.4255}, 2012.

\bibitem[Kingma \& Ba(2015)Kingma and Ba]{kingma2014adam}
Kingma, Diederik and Ba, Jimmy.
\newblock Adam: A method for stochastic optimization.
\newblock \emph{ICLR}, 2015.

\bibitem[Lauritzen(1996)]{lauritzen1996graphical}
Lauritzen, Steffen~L.
\newblock \emph{Graphical models}.
\newblock Oxford University Press, 1996.

\bibitem[Ledoit \& Wolf(2004)Ledoit and Wolf]{ledoit2004well}
Ledoit, Olivier and Wolf, Michael.
\newblock A well-conditioned estimator for large-dimensional covariance
  matrices.
\newblock \emph{Journal of multivariate analysis}, 88\penalty0 (2):\penalty0
  365--411, 2004.

\bibitem[Lenkoski(2013)]{lenkoski2013direct}
Lenkoski, Alex.
\newblock A direct sampler for {G-Wishart} variates.
\newblock \emph{Stat}, 2\penalty0 (1):\penalty0 119--128, 2013.

\bibitem[Li et~al.(2016)Li, Tarlow, Brockschmidt, and Zemel]{li2015gated}
Li, Yujia, Tarlow, Daniel, Brockschmidt, Marc, and Zemel, Richard.
\newblock Gated graph sequence neural networks.
\newblock \emph{ICLR}, 2016.

\bibitem[Lopez-Paz et~al.(2015)Lopez-Paz, Muandet, Sch\"{o}lkopf, and
  Tolstikhin]{lopez2015towards}
Lopez-Paz, David, Muandet, Krikamol, Sch\"{o}lkopf, Bernhard, and Tolstikhin,
  Iliya.
\newblock Towards a learning theory of cause-effect inference.
\newblock In \emph{ICML}, 2015.

\bibitem[Luo et~al.(2010)Luo, Li, Urtasun, and Zemel]{receptiveToronto}
Luo, Wenjie, Li, Yujia, Urtasun, Raquel, and Zemel, Richard.
\newblock Understanding the effective receptive field in deep convolutional
  neural networks.
\newblock In \emph{ICML}, 2010.

\bibitem[Meinshausen \& B{\"u}hlmann(2006)Meinshausen and
  B{\"u}hlmann]{meinshausen2006high}
Meinshausen, Nicolai and B{\"u}hlmann, Peter.
\newblock High-dimensional graphs and variable selection with the lasso.
\newblock \emph{The Annals of Statistics}, pp.\  1436--1462, 2006.

\bibitem[Moghaddam et~al.(2009)Moghaddam, Khan, Murphy, and
  Marlin]{moghaddam2009accelerating}
Moghaddam, Baback, Khan, Emtiyaz, Murphy, Kevin~P, and Marlin, Benjamin~M.
\newblock Accelerating {Bayesian} structural inference for non-decomposable
  {Gaussian} graphical models.
\newblock In \emph{NIPS}, 2009.

\bibitem[Mohammadi \& Wit(2015)Mohammadi and Wit]{mohammadi2015bayesian}
Mohammadi, Abdolreza and Wit, Ernst~C.
\newblock Bayesian structure learning in sparse {Gaussian} graphical models.
\newblock \emph{Bayesian Analysis}, 10\penalty0 (1):\penalty0 109--138, 2015.

\bibitem[Mohan et~al.(2012)Mohan, Chung, Han, Witten, Lee, and
  Fazel]{mohan2012structured}
Mohan, Karthik, Chung, Mike, Han, Seungyeop, Witten, Daniela, Lee, Su-In, and
  Fazel, Maryam.
\newblock Structured learning of {Gaussian} graphical models.
\newblock In \emph{NIPS}, pp.\  620--628, 2012.

\bibitem[Pedregosa~et al(2011)]{pedregosa2011scikit}
Pedregosa~et al, Fabian.
\newblock Scikit-learn: Machine learning in python.
\newblock \emph{JMLR}, 12:\penalty0 2825--2830, 2011.

\bibitem[Peeters et~al.(2015)Peeters, Bilgrau, and van Wieringen]{rags2ridges}
Peeters, C.F.W., Bilgrau, A.E., and van Wieringen, W.N.
\newblock rags2ridges: Ridge estimation of precision matrices from
  high-dimensional data.
\newblock \emph{{R} package}, 2015.

\bibitem[Ravikumar et~al.(2011)Ravikumar, Wainwright, Raskutti, and
  Yu]{ravikumar2011high}
Ravikumar, Pradeep, Wainwright, Martin~J, Raskutti, Garvesh, and Yu, Bin.
\newblock High-dimensional covariance estimation by minimizing
  $\ell_1$-penalized log-determinant divergence.
\newblock \emph{EJS}, 5:\penalty0 935--980, 2011.

\bibitem[Ryali~et al(2012)]{ryali2012estimation}
Ryali~et al, Srikanth.
\newblock Estimation of functional connectivity in {fMRI} data using stability
  selection-based sparse partial correlation with elastic net penalty.
\newblock \emph{NeuroImage}, 59\penalty0 (4):\penalty0 3852--3861, 2012.

\bibitem[Varoquaux \& Craddock(2013)Varoquaux and
  Craddock]{varoquaux2013learning}
Varoquaux, Ga{\"e}l and Craddock, R~Cameron.
\newblock Learning and comparing functional connectomes across subjects.
\newblock \emph{NeuroImage}, 80:\penalty0 405--415, 2013.

\bibitem[Varoquaux et~al.(2010)Varoquaux, Gramfort, Poline, and
  Thirion]{varoquaux2010brain}
Varoquaux, Ga{\"e}l, Gramfort, Alexandre, Poline, Jean-Baptiste, and Thirion,
  Bertrand.
\newblock Brain covariance selection: Better individual functional connectivity
  models using population prior.
\newblock In \emph{NIPS}, 2010.

\bibitem[Varoquaux et~al.(2011)Varoquaux, Gramfort, Pedregosa, Michel, and
  Thirion]{varoquaux2011multi}
Varoquaux, Ga{\"e}l, Gramfort, Alexandre, Pedregosa, Fabian, Michel, Vincent,
  and Thirion, Bertrand.
\newblock Multi-subject dictionary learning to segment an atlas of brain
  spontaneous activity.
\newblock In \emph{IPMI}, 2011.

\bibitem[Vinyals et~al.(2015)Vinyals, Fortunato, and
  Jaitly]{vinyals2015pointer}
Vinyals, Oriol, Fortunato, Meire, and Jaitly, Navdeep.
\newblock Pointer networks.
\newblock In \emph{NIPS}, 2015.

\bibitem[Wang et~al.(2010)Wang, Wainwright, and
  Ramchandran]{wang2010information}
Wang, Wei, Wainwright, Martin~J, and Ramchandran, Kannan.
\newblock Information-theoretic bounds on model selection for gaussian markov
  random fields.
\newblock In \emph{ISIT}, pp.\  1373--1377. Citeseer, 2010.

\bibitem[Watts \& Strogatz(1998)Watts and Strogatz]{watts-strogatz1998}
Watts, Duncan~J. and Strogatz, Steven~H.
\newblock Collective dynamics of `small-world' networks.
\newblock \emph{Nature}, 393\penalty0 (6684):\penalty0 440--442, 06 1998.

\bibitem[Xin et~al.(2016)Xin, Wang, Gao, and Wipf]{xin2016maximal}
Xin, Bo, Wang, Yizhou, Gao, Wen, and Wipf, David.
\newblock Maximal sparsity with deep networks?
\newblock \emph{arXiv preprint arXiv:1605.01636}, 2016.

\bibitem[Yu \& Koltun(2015)Yu and Koltun]{yu2015multi}
Yu, Fisher and Koltun, Vladlen.
\newblock Multi-scale context aggregation by dilated convolutions.
\newblock \emph{arXiv preprint arXiv:1511.07122}, 2015.

\end{thebibliography}
\bibliographystyle{icml2017}
\clearpage 
\appendix
\section{Supplementary Experiments and Analysis}
\subsection{Predicting Covariance Matrices}
Using our framework it is possible to attempt to directly predict an accurate covariance matrix given a noisy one constructed from few observations. This is a more challenging task than predicting the edges. In this section we show preliminay experiments which given an empirical covariance matrix from few observations attempts to predict a more accurate covariance matrix that takes into account underlying sparse data dependency structure. 

One challenge is that outputs of our covariance predictor must be on the positive semidefinite cone, thus we choose to instead predict on the cholesky decompositions, which allows us to always produce positive definite covariances. We train a similar structure to DeepGraph-39 structure modifying the last layer to be fully connected linear layer that predicts on the cholesky decomposition of the true covariance matrices generated by our model with a squared loss.

We evaluate this network using the ABIDE dataset described in Section
\ref{sec:ABIDE}. The ABIDE data has a large number of samples allowing us
to obtain a large sample estimate of the covariance and compare it to our
estimator as well as graphical lasso and empirical covariance estimators.
Using the large sample ABIDE empirical covariance matrix. We find that we
can obtain competitive $\ell_2$ and $\ell_{\infty}$ norm using few
samples. We use 403 subjects from the ABIDE Control group each with a
recording of $150-200$ samples to construct covariance matrix,
totaling $77\,330$ samples (some correlated). This acts as our very approximate estimate of the population $\Sigma$. We then evaluate covariance estimation on $35$ samples using the empirical covariance estimator, graphical lasso, and DeepGraph trained to output covariance matrices. We repeat the experiment for $50$ different subsamples of the data. We see in \ref{table:cov} that the prediction approach can obtain competitive results. In terms of $\ell_2$ graphical lasso performs better, however our estimate is better than empirical covariance estimation and much faster then graphical lasso. In some applications such as robust estimation a fast estimate of the covariance matrix (automatically embedding sparsity assumptions) can be of great use. For $\ell_{\infty}$ error we see the empirical covariance estimation outperforms graphical lasso and DeepGraph for this dataset, while DeepGraph performs better in terms of this metric.  
\begin{table}
\begin{center}
\resizebox{0.35\textwidth}{!}{
\begin{tabular}{ |c|c|c| } 
 \hline
  			  &  mean $\|\hat{\Sigma}-\Sigma\|_2^2$ & mean $\|\hat{\Sigma}-\Sigma\|_{\infty}$ \\ \hline 
 Empirical  & $0.0267$    & $0.543$    \\ 
 Graph Lasso &$0.0223$  &$0.680$ \\

 DeepGraph  &$0.0232$ & $0.673$\\  
 \hline
\end{tabular}
}
\end{center}
\caption{\small Covariance prediction of ABIDE data. Averaged over 50 trials of 35 samples from the ABIDE Control data}
\label{table:cov}
\end{table}
    
We note these results are preliminary, as the covariance predicting networks were not heavily optimized, moreover the ABIDE dataset is very noisy even when pre-processed and thus even the large sample covariance estimate may not be accurate. We believe this is an interesting alternate application of our paper.

\subsection{Additional Synthetic Results on Sparsity}
We investigate the affect of sparsity on DeepGraph-39 which has been trained with input that has sparsity $96\%-92\%$ sparse. We find that DeepGraph performs well at the $2\%$ sparsity level despite not seeing this at training time. At the same time performance begins to degrade for $15\%$ but is still competitive in several categories. The results are shown in Table \ref{table:Synth2}. 
\begin{table*}
\begin{center}
\resizebox{0.64\textwidth}{!}{
\begin{tabular}{ |c|c|c|c|c|c|c|c| } 
 \hline
Experimental Setup	&	Method	&	Prec@$5\%$	&	AUC	&	CE        \\        \hline
	& Glasso & 0.464 $\pm$ 0.038 & 0.726 $\pm$ 0.021  & 0.02 \\
& Glasso (optimal) & 0.519 $\pm$ 0.035 & 0.754 $\pm$ 0.019  & 0.02 \\
Gaussian Random Graphs& BDGraph & 0.587 $\pm$ 0.033 & 0.811 $\pm$ 0.017  & 0.15\\
(n=35,p=39,sparsity=2\%)& DeepGraph-39 & 0.590 $\pm$ 0.026 & 0.810 $\pm$ 0.019  & 0.03 \\
& DeepGraph-39+Perm & 0.598 $\pm$ 0.026 & 0.831 $\pm$ 0.017  & 0.03  \\\hline
& Glasso & 0.732 $\pm$ 0.046 & 0.562 $\pm$ 0.013  & 0.32 \\
& Glasso (optimal) & 0.847 $\pm$ 0.029 & 0.595 $\pm$ 0.011  & 0.33 \\
Gaussian Random Graphs& BDGraph & 0.861 $\pm$ 0.015 & 0.654 $\pm$ 0.013  & 0.33\\
(n=35,p=39,sparsity=15\%)& DeepGraph-39 & 0.678 $\pm$ 0.032 & 0.643 $\pm$ 0.012  & 0.33 \\
& DeepGraph-39+Perm & 0.792 $\pm$ 0.023 & 0.660 $\pm$ 0.011  & 0.33\\\hline
\end{tabular}
}
\end{center}
\caption{\small For each scenario we generate 100 graphs with 39 nodes, and corresponding data matrix sampled from distributions with those underlying graphs. The number of samples is indicated by $n$.}
\label{table:Synth2}
\end{table*}
Future investigation can consider how alternate variation of sparsity at training time will affect these results.
 
\subsection{Application of Larger Network on Smaller Input}
We perform preliminary investigation of application of a network trained for a larger number of nodes to a smaller set of nodes. Specifically, we consider the breast invasive carcinoma groups gene data. We now take all 175 valid genes from Appendix C.2 of \cite{honorio2012statistical}. We take the network trained on 500 nodes in the synthetic experiments section. We use the same experimental setup as in the gene experiments. The $175\times 175$ covariance matrix from 40 samples and padded to the appropriate size. We observe that DeepGraph has similar performance to graph lasso while permuting the input and ensembling the result gives substantial improvement.

\begin{figure}
\centering
\includegraphics[scale=0.32]{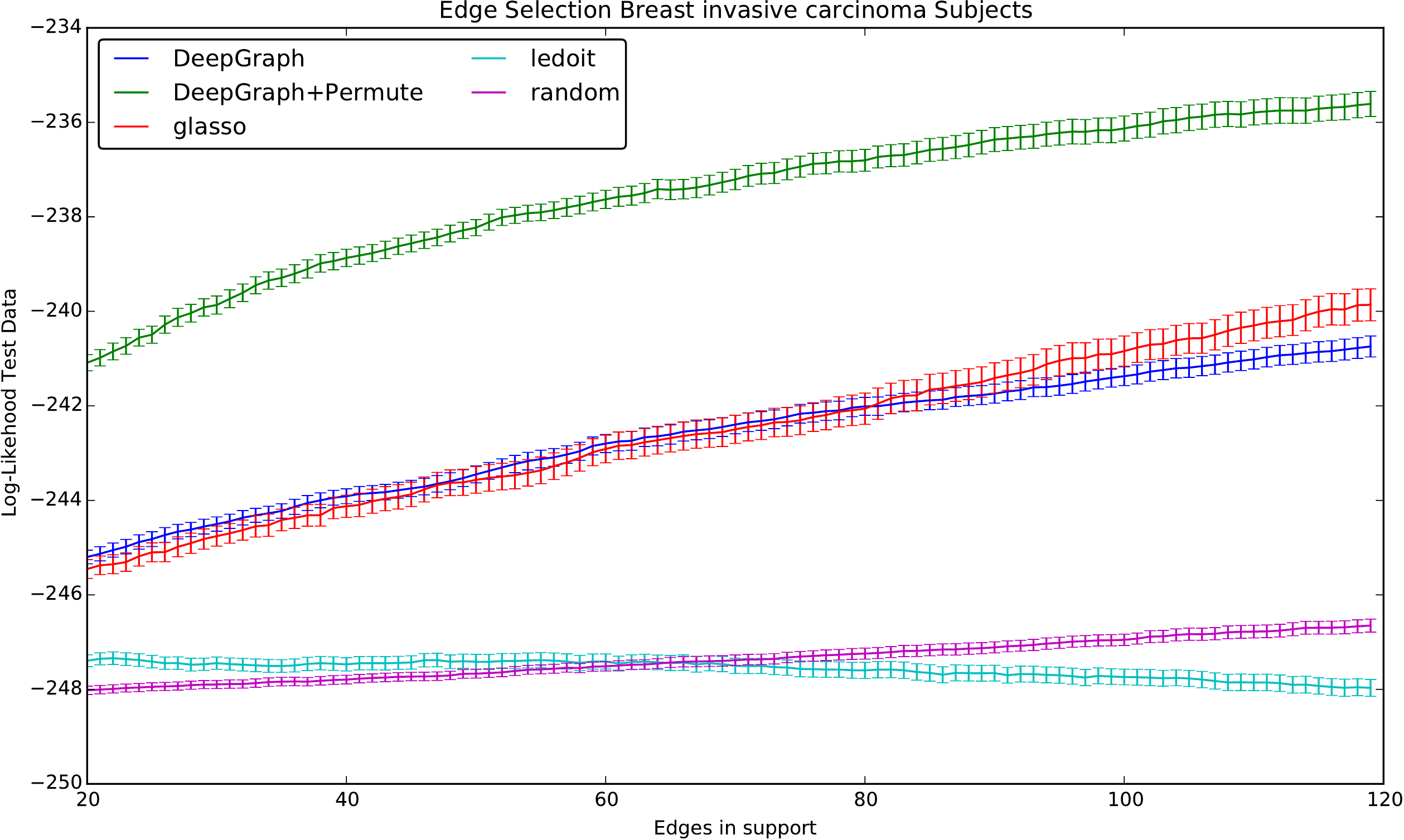}
\caption{\small Average test likelihood over 50 trials of applying a network trained for 500 nodes, used on a 175 node problem}
\end{figure}  

\subsection{Permutation as Ensemble Method}
\label{appendix:permuation}
As discussed in Section \ref{sec:deepnet}, permuting the input and averaging several permutations can produce an improved result empirically. We interpret this as a typical ensembling method. This can be an advantage of the proposed architecture as we are able to easily use standard ensemble techniques. We perform an experiment to further verify that indeed the permutation of the input (and subsequent inverse permutation) allows us to produce separate classifiers that have uncorrelated errors. 

We use the setup from the synthetic experiments with DeepGraph-39 in Section \ref{sec:Experiments} with $n=35$ and $p=39$. We construct 20 permutation matrices as in the experimental section. Treating each as a separate classifier we compute the correlation coefficient of the errors on 50 synthetic input examples. We find that the average correlation coefficient of the errors of two classifiers is $0.028 \pm 0.002$, suggesting they are uncorrelated. Finally we note the individual errors are relatively small, as can already be inferred from our extensive experimental results in  Section \ref{sec:Experiments}. We however compute the average absolute error of all the outputs across each permutation for this set of inputs as $0.03$, notably the range of outputs is $0$ to $1$. Thus since prediction error differ at each
permutation but are accurate we can average and yield a lower total prediction error. 

Finally we note that our method is extremely efficient computationally thus averaging the results of several permutations is practical even as the graph becomes large.

\end{document}